\documentclass{siamonline250211}
\def\@#1{\bm{#1}}                               
\def\!#1{\mathbf{#1}}                           

\newcommand{\diff}{\:\mathrm{d}}                

\newcommand{\cond}{\:|\:}                       

\newcommand{\tcont}{\:\overline{\times}\:}      
\newcommand{\tct}{\times\cdots\times}           

\newcommand{\N}{\mathbb{N}}
\newcommand{\R}{\mathbb{R}}

\newcommand{\cA}{\mathcal{A}}
\newcommand{\cB}{\mathcal{B}}
\newcommand{\cC}{\mathcal{C}}

\newcommand{\cL}{\mathcal{L}}
\newcommand{\cM}{\mathcal{M}}
\newcommand{\cO}{\mathcal{O}}

\newcommand{\cT}{\mathcal{T}}
\newcommand{\cW}{\mathcal{W}}
\newcommand{\cX}{\mathcal{X}}
\newcommand{\cY}{\mathcal{Y}}

\newcommand{\flt}{\text{flat}}                  

\usepackage{algorithmic}

\usepackage{amsfonts}
\usepackage{amssymb}
\usepackage{bbm}
\usepackage{bm}
\usepackage{booktabs}
\usepackage{cancel}
\usepackage{graphicx}
\usepackage{xcolor}
\usepackage{mdframed}
\usepackage{multirow}
\usepackage{nameref}
\usepackage{standalone}
\usepackage{stmaryrd}
\usepackage{tikz}

\newtheorem{remark}[theorem]{Remark}

\DeclareMathOperator*{\argmin}{argmin}

\title{Tensor-Train Weak SINDy: Identifying High-Dimensional Nonlinear Dynamics}
\author{Will Houser\thanks{Department of Applied Mathematics, University of Colorado, Boulder, CO 80309-0526 USA (will.houser@colorado.edu, vadu4217@colorado.edu, dmbortz@colorado.edu) }\and Vanja Dukic\footnotemark[1]\and David M. Bortz\footnotemark[1]}
                                                                                   
\begin{document}
\maketitle
\begin{abstract}
    Weak Sparse Identification of Nonlinear Dynamics (WSINDy) provides a noise-robust approach for learning dynamical systems from data without requiring numerical differentiation. However, for high-dimensional systems, tensor-product libraries of candidate functions grow exponentially with the state dimension, making standard WSINDy expensive in both computation and memory. The Multidimensional Approximation of Nonlinear Dynamics (MANDy) addresses this scaling through a tensor-train (TT) representation of the candidate library, but does not provide a mechanism for sparse model selection. Here, we combine these approaches to develop TT-WSINDy, which performs the weak-form transformation, regression, and sparsification in TT format. We show that the TT formulation recovers the corresponding WSINDy regression problem and derive polynomial time and memory complexity bounds for the tensor-train sparsification procedure. Numerical experiments demonstrate robustness to measurement noise and computational savings for high-dimensional systems.
\end{abstract}
\begin{keywords}
    weak form, tensor-train decomposition, sparse regression
\end{keywords}
\begin{MSCcodes}
    65F55, 15A69, 93B30, 65L09
\end{MSCcodes}

\section{Introduction\label{sec:Introduction}}
Data-driven identification of dynamical systems can broadly be divided into output-error (OE) and equation-error (EE) approaches. Output-error methods compare observed data with trajectories obtained by solving a candidate dynamical system. While widely used, this approach can be computationally expensive, subject to numerical solver error, and highly sensitive to numerical and optimization choices, including the discretization method and resolution, optimization algorithm, and associated hyperparameters \cite{NardiniBortz2019InverseProblems}. Equation-error methods instead evaluate the governing equations directly on the data, avoiding repeated forward simulation. One such approach is \textit{Sparse Identification of Nonlinear Dynamics (SINDy)} \cite{BruntonProctorKutz2016ProcNatlAcadSci}, which identifies a parsimonious model by selecting a sparse subset from a prescribed library of candidate functions. The development of SINDy has generated substantial interest in EE-based model discovery, and has been extended in the contexts of PDEs \cite{RudyBruntonProctorEtAl2017SciAdv}, control inputs \cite{BruntonProctorKutz2016IFAC-PapersOnLine}, and implicit or rational dynamics \cite{KahemanKutzBrunton2020ProcRSocA}. The framework has seen applications across various scientific domains, including fluid dynamics \cite{LoiseauBrunton2018JFluidMech}, systems biology \cite{ManganBruntonProctorEtAl2016IEEETransMolBiolMulti-ScaleCommun}, and chemical kinetics \cite{HoffmannFrohnerNoe2019JChemPhys}.

Two difficulties with SINDy-based methods are the computation of derivative data and the sensitivity to measurement noise. Often, only the state data $\!X$ are observed, while the derivative data $\dot{\!X}$ are necessary for computation of the residual. The need to approximate $\dot{\!X}$ can render the overall learning algorithm sensitive to noisy data. Building off of SINDy, several strategies have been developed to address computation of the derivative and accommodate imprecise data \cite{vanbreugelNumericalDifferentiationNoisy2020, WentzDoostan2023ComputerMethodsinAppliedMechanicsandEngineering,  kahemanAutomaticDifferentiationSimultaneously2022, FaselKutzBruntonEtAl2022ProcRSocMathPhysEngSci}.

\textit{Weak-form} methods circumvent the derivative estimation problem by integrating both sides of the equation against a smooth, compactly-supported test function, and applying integration by parts to transfer the derivative from the measured state to the test function. The integration also smooths the raw data, making the resulting regression problem more robust to noise. The use of weak formulations for parameter estimation dates to at least Shinbrot \cite{Shinbrot1954NACATN3288}, and their increasing adoption for data-driven model discovery reflects an emerging consensus that weak-form approaches are highly effective at addressing the challenges of measurement noise and derivative approximation \cite{GurevichGoldenReinboldEtAl2024JFluidMech,Messenger.Bortz2021JournalofComputationalPhysics,Messenger.Bortz2021MultiscaleModelSimul,PantazisTsamardinos2019Bioinformatics,SchaefferMcCalla2017Phys.Rev.E,wangVariationalSystemIdentification2020}. Of these approaches, we build on \textit{Weak SINDy (WSINDy)} \cite{Messenger.Bortz2021JournalofComputationalPhysics,Messenger.Bortz2021MultiscaleModelSimul}, from which we adopt both notation and sparse regression techniques. WSINDy and related weak-form methods have subsequently been applied to ODEs \cite{Bortz.etal2023BullMathBiol, Chawla.etal2025HandbookofVisualExperimentalandComputationalMathematicsBridgesthroughData, Heitzman-Breen.etal2026BullMathBiol, messengerCoarseGrainingHamiltonianSystems2023a, RummelMessengerBeckerEtAl2026SIAMJSciComput, Lopez.etal2026arXiv260423269, Tian.etal2026arXiv260530432}, PDEs \cite{LyonsDukicBortz2025PLoSComputBiol, MessengerDallAneseBortz2022ProcThirdMathSciMachLearnConf, MinorMessengerDukicEtAl2025JournalofGeophysicalResearchMachineLearningandComputation, Vasey.etal2025JournalofComputationalPhysics}, SDEs \cite{2024HandbookofNumericalAnalysis, MinorElderdBortzEtAl2026SIAMJLifeSci}, reduced order modeling \cite{HeTranBortzEtAl2025IntJNumerMethodsEng, TranHeMessengerEtAl2024ComputMethodsApplMechEng, WangMessenger2025PhysRevA}, and closure discovery \cite{MessengerSouthworthHammerEtAl2025arXiv251011840}. For an overview  of weak-form methods, see \cite{Messenger.etal2024SIAMNewsa}.

\textit{Tensor decompositions} have also seen increased interest in recent decades. The concept of a tensor decomposition dates back to Hitchcock's 1927 work on the canonical polyadic (CP) decomposition \cite{Hitchcock1927JournalofMathematicsandPhysics}, rediscovered in 1970 by Carroll and Chang as CANDECOMP \cite{CarrollChang1970Psychometrika} and Harshman as PARAFAC \cite{Harshman}. Subsequent to these works and the 1960s work of Tucker \cite{Tuck1963a}, each of which originated in the psychometrics literature, tensor methods increasingly spread to applied mathematics, including signal processing, numerical linear algebra, and computer vision, and were synthesized in the foundational review by Kolda and Bader \cite{KoldaBader2009SIAMRevb}. 

Classical tensor formats can suffer from the curse of dimensionality, with storage or computational costs that scale poorly with tensor order. Finding the CP rank of a general tensor is NP-hard \cite{Hastad1990JAlgorithms}, while the storage cost of the Tucker decomposition grows exponentially with the number of modes. This motivated the development of tensor formats whose storage scales linearly with tensor order for bounded ranks, most notably the tree-structured hierarchical Tucker format \cite{Hackbusch2014ActaNumerica}. A special case is the tensor-train (TT) decomposition, introduced in 2011 by Oseledets \cite{Oseledets2011SIAMJSciComput}, and known in computational physics community since the early 1990s as the matrix product state (MPS) \cite{Fannes.etal1992CommunicationsinMathematicalPhysics}. The TT decomposition is the tensor representation used throughout this work.

Tensor decompositions have shown recent promise in improving existing data-driven methods, providing the ability to work in a high-dimensional space without forming the corresponding high-dimensional tensor in memory. For example, \cite{zhang2026robustmomentbasedestimationspectral} decomposes high-dimensional moment tensors to form a polynomial-time variant of the generalized method of moments (GMM) procedure. Klus, Gel\ss, and others showed that the tensor-train format can be applied to dynamic mode decomposition (DMD) \cite{Klusetal2018Nonlinearity} and, through the Multidimensional Approximation of Nonlinear Dynamics (MANDy) method \cite{GelssKlusEisertEtAl2019JComputNonlinearDyn}, to the identification of nonlinear dynamical systems. MANDy addresses the high-dimensional scaling problem, but does not incorporate the weak form or sparse model selection, motivating its combination with WSINDy in the present work.

Our work proceeds as follows. \Cref{sec:notation+preliminaries} establishes notation, gives exposition on SINDy and WSINDy, and defines the tensor-train format and relevant tensor operations. \Cref{sec:WSTINDy} presents the TT-WSINDy problem construction and the TT-MSTLS algorithm to identify a sparse solution. \Cref{sec:WSINDyT=WSINDy} proves the connection between the TT-WSINDy problem construction and the original WSINDy problem. \Cref{sec:TTwsindySpeed} establishes the time and memory complexity of TT-MSTLS and shows that it circumvents the curse of dimensionality, and introduces an algorithm that improves computational efficiency for large datasets. \Cref{sec:results} provides results demonstrating the effectiveness of TT-WSINDy, both in robustness to noise and computational efficiency.

\section{Notation and preliminaries\label{sec:notation+preliminaries}}
In this section, we describe the problem setting and the \textit{Weak SINDy (WSINDy)} algorithm used to identify the approximate dynamics. We then introduce the \textit{tensor-train (TT) format}, which admits a fast pseudoinverse computation. The TT format is compatible with both the weak formulation and sparse regression, providing the foundation for the TT-WSINDy algorithm introduced in Section \ref{sec:WSTINDy}.

\subsection{Problem setting}
We consider the problem of learning a first-order ordinary differential equation in $D$ dimensions, taking the form
\begin{equation}
    \dot{\!x}(t) = \!F(\!x(t)), \quad \!x(0) = \!x_0 \in \R^D, \quad 0 \leq t \leq T
    \label{eqn:ODEproblem}
\end{equation}
We have measurement data $\!X \in \R^{D \times M}$, sampled at timepoints $(t_1,\dots,t_M)$ and given by
\begin{equation}
    \!X[d,m] = \!x_d(t_m) + \epsilon_{d,m}, \quad d \in [D], m \in [M]
\end{equation}
where $\{\epsilon_{d,m}\}_{d\in[D],m\in[M]}$ is independent and identically distributed measurement noise from some common distribution. In this work, we assume that $\!F$ is \textit{linear in parameters}, i.e. given dimensionwise by
\begin{equation}
    \!F_d(\!x(t)) = \sum_{j=1}^J \!W[d,j]\cdot g_j(\!x(t))
    \label{eqn:linearInParameters}
\end{equation}
where the family of functions $\@g := \{g_j\}_{j\in[J]}$, with $g_j: \R^D \rightarrow \R$, is known. In this context, discovering the dynamics in \cref{eqn:ODEproblem} is equivalent to discovering the \textit{coefficient matrix} $\!W \in \R^{D\times J}$. 

The SINDy algorithm \cite{BruntonProctorKutz2016ProcNatlAcadSci} addresses problems of this form by constructing a \textit{feature matrix} $\Theta(\!X) \in \R^{J \times  M}$ whose rows are discretizations of each $g_j \in \@g$:
\begin{equation}
    \Theta(\!X)[j,m] = g_j(\!X[:,m])
\end{equation}
Given pointwise derivative data $\dot{\!X}$, evaluated on the same spatiotemporal grid as $\!X$, we have the discrete problem
\begin{equation}
    \dot{\!X} \approx \!W\Theta(\!X)
    \label{eqn:strongFormDiscrete}
\end{equation}
Then, SINDy finds $\widehat{\!W}$ such that
\begin{equation}
\begin{split}
    & \widehat{\!W} \text{ is sparse, and} \\
    & \widehat{\!W} \approx \argmin_{\!W} \|\dot{\!X} - \!W\Theta(\!X)\|_F
    \label{eqn:SINDyOpitmizationProblem}
\end{split}
\end{equation}
For the original work on this problem formulation and computation of $\widehat{\!W}$, see \cite{BruntonProctorKutz2016ProcNatlAcadSci}.

In many physical applications, $\dot{\!X}$ is not observed and must be estimated from $\!X$, with the resulting loss of accuracy compounded by measurement noise. The WSINDy method addresses these issues. 

\subsection{WSINDy\label{sec:WSINDy}}
Several groups have independently discovered that using the \textit{weak form} of \cref{eqn:ODEproblem} bypasses the need to compute $\dot{\!X}$ and makes the problem robust to noise \cite{GurevichGoldenReinboldEtAl2024JFluidMech,Messenger.Bortz2021JournalofComputationalPhysics,Messenger.Bortz2021MultiscaleModelSimul,PantazisTsamardinos2019Bioinformatics,SchaefferMcCalla2017Phys.Rev.E,wangVariationalSystemIdentification2020}.
Here, we briefly review the \textit{Weak SINDy (WSINDy)} formulation used in this work.

We introduce a smooth \textit{test function} $\varphi : \R \rightarrow \R$, compactly supported in some interval $[a,b] \subset [0,T]$. Integrating $\varphi$ against both sides of \cref{eqn:ODEproblem} and applying integration by parts to the LHS results in the weak formulation
\begin{equation}
    -\int_0^T \dot{\varphi}(t)\!x(t) \diff t = \int_0^T \varphi(t)\!F(\!x(t))dt
    \label{eqn:weakFormCts}
\end{equation}
Now, let $\@\varphi \in \R^B$ be the discretization of $\varphi$ on $\!t \cap (a,b)$. Then, approximating the integrals in \cref{eqn:weakFormCts} via the trapezoidal rule yields the discrete problem
\begin{equation}
    -\!X\dot{\@\Phi} \approx \!W\Theta(\!X)\@\Phi
    \label{eqn:weakFormDiscrete}
\end{equation}
where $\@\Phi$ is the banded matrix
\begin{equation}
    \@\Phi = \begin{bmatrix}
        \@\varphi[1] & & & \\
        \vdots & \@\varphi[1] \\
        \@\varphi[B] & \vdots \\
        & \@\varphi[B] & \ddots & \@\varphi[1] \\
        &&& \vdots \\
        &&& \@\varphi[B]
    \end{bmatrix} \in \R^{M \times (M - B + 1)}
    \label{eqn:Phi}
\end{equation}
and $\dot{\@\Phi}$ is the same, but with test function derivative $\dot{\@\varphi}$. Crucially, \cref{eqn:weakFormDiscrete} has no dependence on $\dot{\!X}$, thus avoiding   potentially unstable numerical differentiation. Moreover, the integration in \cref{eqn:weakFormCts} mollifies the noise in $\!X$. Analogous to \cref{eqn:SINDyOpitmizationProblem}, the WSINDy optimization problem is then to find a sparse $\widehat{\!W}$ such that
\begin{equation}
    \widehat{\!W} \approx \argmin_{\!W} \|\!X\dot{\@\Phi} + \!W\Theta(\!X)\@\Phi\|_F
    \label{eqn:WSINDyOptimizationProblem}
\end{equation}
For details on choosing $\varphi$, see \cite{Tran.Bortz2026SIAMJSciComput}. Section \ref{sec:WSTINDy} recasts this weak-form construction into a tensor format. 

\subsection{Tensor preliminaries}
In this work, a tensor is a multidimensional array, generalizing a matrix. As we specify a matrix $\!X \in \R^{m \times n}$ by the dimensions $(m,n)$, we specify the shape of an order-$N$ tensor $\cX \in \R^{I_1\tct I_N}$ by the tuple $(I_1,\dots,I_N)$, called the \textit{modes} of $\cX$. The storage cost of such a tensor is $\cO(I^N)$, where $I := \max_{n\in[N]} I_n$ is the maximum mode size. For large $N$, storage of such an object may easily be infeasible, and the time cost of performing even basic algebraic operations can explode. This is referred to as the \textit{curse of dimensionality}. However, it is often the case that immense amounts of memory and time can be saved by representing a high-dimensional tensor through a low-rank decomposition. Such decompositions exploit a low-rank correlation structure. We will show that the WSINDy learning problem admits such a representation.

In order to discuss relevant tensor decompositions in detail, we introduce two binary tensor operations.
\begin{definition}
    Let $\cX \in \R^{I_1 \times \cdots\times I_N}$, $\cY \in \R^{J_1\times\cdots\times J_M}$. Their \textit{outer product}, denoted $\cX \circ \cY \in \R^{I_1\times\cdots\times I_N \times J_1\times\cdots\times J_M}$, is defined element-wise as
    \[(\cX \circ \cY)[i_1,\dots,i_N,j_1,\dots,j_M] = \cX[i_1,\dots,i_N]\cY[j_1,\dots,j_M]\]
\end{definition}
This definition generalizes the typical vector outer product $\!u\!v^T$. Similarly, tensor contraction generalizes matrix multiplication, collapsing adjacent tensor dimensions.
\begin{definition}
    Let $\cX \in \R^{I_1 \times \cdots \times I_N \times J_1 \times \cdots \times J_L}$, $\cY \in \R^{J_1 \times \cdots \times J_L \times K_1 \times \cdots \times K_M}$. Their \textit{tensor contraction}, denoted $\cX \tcont \cY \in \R^{I_1 \times\cdots\times I_N\times K_1\times\cdots\times K_M}$, is defined element-wise as 
    \[(\cX\tcont\cY)[i_1,\dots,i_N,k_1,\dots,k_M] = \sum_{j_1,\dots,j_L}\cX[i_1,\dots,i_N,j_1,\dots,j_L]\cY[j_1,\dots,j_L,k_1,\dots,k_M]\]
    \cite{BallardKolda}
\end{definition}
Note that there is ambiguity in the notation  $\tcont$, as it does not specify which indices are to be contracted. In this work, this will be clear from context.

We will also require representations of tensors as matrices, called \textit{matricizations}. To specify the ordering used in this operation, we let $b_N$ denote a bijection that maps a multi-index to a single index
\begin{align*}
    b_N : [I_1] \times [I_2] \tct [I_N] &\rightarrow [I_1I_2\cdots I_N] \\
    (i_1,i_2,\dots,i_N) &\mapsto \overline{i_1,i_2,\dots,i_N}
\end{align*}
e.g., lexicographical order. In implementation, different programming languages may use different orderings (column-major in Matlab vs row-major in NumPy, for example). The specific ordering does not matter in our context; it is only important that all operations that invoke an order are defined with respect to the same underlying bijection.

\begin{definition}
    Let $\cX \in \R^{I_1 \times \cdots \times I_N}$, and $n \in [N]$. We denote the \textit{mode-$n$ unfolding}, or \textit{mode-$n$ matricization} of $\cX$ by
    \[\cX_{(n)} = \cX\big|_{I_1,\dots,I_n}^{I_{n+1},\dots,I_N} \in \R^{I_1\cdots I_n \times I_{n+1}\cdots I_N}\]
    defined element-wise as
    \[\left(\cX_{(n)}\right)\bigg[\overline{i_1,\dots,i_n},\overline{i_{n+1},\dots,i_N}\bigg] = \cX[i_1,\dots,i_N]\]
\end{definition}
In this work, we will use the following special cases of unfoldings.
\begin{definition}
    Let $\cX \in \R^{I_1 \times \cdots \times I_N}$. We define the \textit{right unfolding} and \textit{left unfolding} of $\cX$, respectively, as
    \[\!R(\cX) = \cX_{(1)}, \quad \!L(\cX) = \cX_{(N-1)}\]
\end{definition}

\subsection{The tensor-train format}
Central to this work is the \textit{tensor-train (TT)} format. A special case of the \textit{hierarchical Tucker format} \cite{Hackbusch2014ActaNumerica}, the TT format decomposes an arbitrary tensor into a sequence of order-$3$ \textit{TT cores}. Each core shares its first mode length with the previous core, and the last with the next, hence, the ``train'' of tensors. This structure allows for efficient computation of a pseudoinverse without forming the full tensor.

\begin{definition}
    Let $\cT \in \R^{I_1\tct I_N}$. We say $\cT$ is in the \textit{tensor-train format} if it admits the expression
    \begin{equation}
        \cT = \sum_{r_0=1}^{R_0}\sum_{r_1=1}^{R_1}\cdots \sum_{r_N=1}^{R_N} \cT^{(1)}[r_0,:,r_1] \circ \cT^{(2)}[r_1,:,r_2] \circ \cdots \circ \cT^{(N)}[r_{N-1},:,r_N]
        \label{eqn:TensorTrain}
    \end{equation}
    over \textit{core tensors}
    \begin{equation*}
        \cT^{(n)} \in \R^{R_{n-1}\times I_n\times R_n}, \quad n \in [N]
    \end{equation*}
    We let $R_0 = R_N = 1$, and we say that the tuple $(R_1,\dots,R_{N-1})$ gives the \textit{TT ranks}\footnote{Note that there are multiple definitions of tensor rank -- TT rank is distinct from the commonly-used \textit{CP rank} \cite{KoldaBader2009SIAMRevb}.} of $\cT$. \cite{Oseledets2011SIAMJSciComput}
\end{definition}

\begin{figure}
    \centering
    \includegraphics[width=0.8\linewidth]{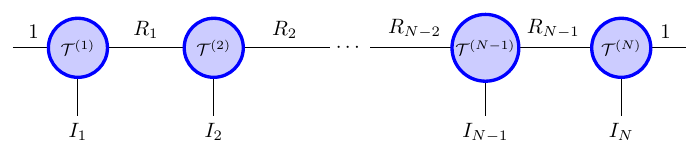}
    \caption{Diagram of tensor-train $\cT = \big\llbracket \cT^{(1)},\dots,\cT^{(N)}\big\rrbracket \in \R^{I_1\tct I_N}$ with ranks $(R_1,\dots,R_{N-1})$. Adjacent cores share a mode. The boundary ranks $R_0, R_N$ are fixed to $1$, so the outer cores can be considered matrices.}
    \label{fig:ttnodeedgediagram}
\end{figure}
We write TT cores compactly as two-dimensional arrays with vector elements
\begin{equation}
    \cT^{(n)} = \left\llbracket \begin{matrix}
        \cT^{(n)}[1,:,1] & \cdots & \cT^{(n)}[1,:,R_n] \\
        \vdots & \ddots & \vdots \\
        \cT^{(n)}[R_{n-1},:,1] & \cdots & \cT^{(n)}[R_{n-1},:,R_n]
    \end{matrix}\right\rrbracket
\end{equation}
and use the following shorthand to define a TT in terms of its cores
\begin{equation}
    \cT = \text{TT}\big\llbracket \cT^{(1)},\dots,\cT^{(N)} \big\rrbracket
\end{equation}
An essential property of the TT format is that it admits an efficiently-computable, analytic pseudoinverse. Given $\cT$ as in \cref{eqn:TensorTrain}, its pseudoinverse\footnote{
Since this object does not have boundary ranks of 1, it is not in TT format, strictly speaking. We can consider $\cT^\dag$ a \textit{cyclic tensor train}, or a sum of $R_{N-1}$ tensor trains, scaled by $\sigma_{r_{N-1}}^{-1}$ \cite{GelssKlusEisertEtAl2019JComputNonlinearDyn}.
} is given by
\begin{equation}
    \cT^\dag := \sum_{r_1=1}^{R_1}\cdots \sum_{r_{N-1}=1}^{R_{N-1}}\sigma_{r_{N-1}}^{-1}\widetilde{\cT}^{(N)}[r_{N-1},:,1]\circ \widetilde{\cT}^{(1)}[1,:,r_1] \circ \cdots \circ \widetilde{\cT}^{(N-1)}[r_{N-2},:,r_{N-1}]
    \label{eqn:TTPseudoinverse}
\end{equation}
with $\cT^\dag \in \R^{I_N \times I_1 \tct I_{N-1}}$. $\widetilde{\cT}^{(1)},\dots,\widetilde{\cT}^{(N-1)}$ are left-orthonormalized counterparts of the cores $\cT^{(1)},\dots,\cT^{(N-1)}$, and $\widetilde{\cT}^{(N)}$ is $\cT^{(N)}$, right-orthonormalized. Algorithms to compute these can be found in \cite{Oseledets2011SIAMJSciComput, Klusetal2018Nonlinearity}, and additional discussion of orthonormalization of TT cores can be found in \cite{GelssKlusEisertEtAl2019JComputNonlinearDyn}. The coefficients $\{\sigma_{r_{N-1}}\}_{r_{N-1} \in [R_{N-1}]}$ are obtained through a sequence of SVDs of the original cores $\{\cT^{(n)}\}_{n\in[N]}$, so the pseudoinverse can be computed by working only with the TT cores. We never form the full tensor and thus never leave the low-rank latent space we have constructed. This results in the following lemma.
\begin{lemma}
    Given tensor train $\cT \in \R^{I_1\tct I_N \times K}$, there exists an algorithm that computes $\cT^\dag$ with time complexity $\cO(NIR^3 + KR^2)$, where $I$ is the maximum of the first $N$ modes, and $R$ is the maximum TT rank. \cite{GelssKlusEisertEtAl2019JComputNonlinearDyn}
    \label{lem:TTPI-runtime}
\end{lemma}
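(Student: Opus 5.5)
The plan is to make the algorithm behind \cref{eqn:TTPseudoinverse} explicit and charge each step its cost. The tensor $\cT \in \R^{I_1\tct I_N \times K}$ is a train of $N+1$ cores. The cores $\cT^{(1)},\dots,\cT^{(N)}$ have sizes at most $R\times I\times R$. The final core $\cT^{(N+1)} \in \R^{R_N \times K \times 1}$ carries the column index $K$. The algorithm has two phases. The first left-orthonormalizes the first $N$ cores by a sweep of QR (or SVD) decompositions. The second performs a single SVD at the junction with the last core. Reading off the factors gives $\widetilde{\cT}^{(1)},\dots,\widetilde{\cT}^{(N)}$, the singular values $\sigma_{r_N}$, and the right-orthonormal final core. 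By \cref{eqn:TTPseudoinverse} these are all that is needed to represent $\cT^\dag$, so the full tensor is never formed.

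For the sweep, at step $n$ I form the left unfolding $\!L(\cT^{(n)}) \in \R^{R_{n-1}I_n \times R_n}$ and compute a thin QR decomposition $\!L(\cT^{(n)}) = \!Q_n\!S_n$. I reshape $\!Q_n$ into $\widetilde{\cT}^{(n)}$ and absorb $\!S_n$ into the next core by contracting it with $\cT^{(n+1)}$ along its first mode. The QR of an $RI\times R$ matrix costs $\cO(IR^3)$. The absorption multiplies an $R\times R$ matrix against $\!R(\cT^{(n+1)}) \in \R^{R\times IR}$, which also costs $\cO(IR^3)$. Repeating this for $n=1,\dots,N$ gives $\cO(NIR^3)$. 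The orthonormality invariant is the standard one: the contraction of the first $n$ orthonormalized cores has orthonormal columns when unfolded. I would cite \cite{Oseledets2011SIAMJSciComput} for this rather than reprove it.

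For the last step, after the sweep the matrix $\!S_N\!R(\cT^{(N+1)})$ has size $R\times K$. Forming it costs $\cO(KR^2)$, and its thin SVD $\!U\Sigma\!V^T$ also costs $\cO(KR^2)$. I absorb $\!U$ into $\widetilde{\cT}^{(N)}$ at cost $\cO(IR^3)$, set the new last core to $\!V^T$, and take $\sigma_{r_N}$ from $\Sigma$. Since the left factor has orthonormal columns, $\cT$ viewed as a matrix equals $(\text{left part})\,\Sigma\,\!V^T$, which is a compact SVD. Hence $\cT^\dag = \!V\Sigma^{-1}(\text{left part})^T$, restricted to the nonzero $\sigma$ (or with the ranks truncated so that all $\sigma$ are nonzero). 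This matches \cref{eqn:TTPseudoinverse}. Inverting the $R$ singular values costs $\cO(R)$, so the total is $\cO(NIR^3 + KR^2)$.

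The main obstacle I expect is bookkeeping rather than cost. I need to confirm that the left factor's unfolding really has orthonormal columns, so that the product is a genuine SVD and the Moore--Penrose conditions hold. I also need to handle rank deficiency, meaning zero singular values, which requires truncation. The first point follows by induction on $n$ using $\!Q_n^T\!Q_n = \!I$ and the block structure of the unfolding. I would handle zero singular values by dropping them, which only decreases $R$.
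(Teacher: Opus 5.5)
Your proposal is correct and follows essentially the same route as the paper, which defers to the pseudoinversion algorithm in \cref{alg:TTPI}. That algorithm runs a left-orthonormalizing sweep over the first $N$ cores, pushing the remaining factor into the next core, and then does one SVD at the junction with the $K$-mode core, so the cost is $N$ factorizations of $\cO(IR^3)$ each plus an $\cO(KR^2)$ final step. Your thin QR in place of SVD in the sweep, and your single SVD of $\mathbf{S}_N\mathbf{R}(\cT^{(N+1)})$ instead of the paper's SVD of the last core followed by re-factoring core $N$, are cosmetic variations with the same complexity; your explicit compact-SVD check of the pseudoinverse also fills in what the paper leaves to the citation.
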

For a detailed description and proof, see \cite{Klusetal2018Nonlinearity}; a brief overview of the algorithm is provided in \cref{sec:TT-PI}. The obtained pseudoinverse is compatible with the equivalent matrix problem:\footnote{In fact, the pseudoinverse is defined such that it is compatible with our choice of matricization. In this work, we matricize based on the left-unfolding, but we can just as easily define a different matricization that will induce a different pseudoinverse. Recent work has shown that the choice of matricization can affect the conditioning of the underlying problem \cite{Popaetal2021Geophysics}.} it holds that
\begin{equation}
    \!R(\cT^\dag) = \!L(\cT)^\dag
\end{equation}
In contrast, direct computation of $\!L(\cT)^\dag$ using a matrix pseudoinverse can be estimated as $\cO(I^NK^2)$. Thus, provided the TT ranks remain sufficiently small, \cref{lem:TTPI-runtime} can yield significant computational savings. Section \ref{sec:WSTINDy} recasts the WSINDy learning problem into a TT pseudoinverse computation that exploits this advantage.

\renewcommand{\arraystretch}{1.5}
\begin{table}[h]
\caption{General tensor notation and relevant unary operations}
\footnotesize
  \centering
  \begin{tabular}{|c|c|c|}
    \hline
    Symbol & Domain & Meaning  \\\hline 
    $x,c,C$ (plain font) & $\R$ & scalar \\\hline 
    $\!x$ (bold lowercase) & $\R^N$ & vector \\\hline
    $\!X$ (bold uppercase) & $\R^{M\times N}$ & matrix \\\hline 
    $\cX, \@\Psi$ (calligraphy, bold greek) & $\R^{I_1\times\cdots\times I_N}$ & tensor \\\hline
    $(i_1,\dots,i_N)$ & $\N^{I_1\times\cdots\times I_N}$ & multi-index \\\hline
    $b_N$ & $C([I_1]\tct[I_N], [I_1\cdots I_N])$ & bijective ordering of multi-indices \\\hline
    $\cX_{(n)}, \cX\big|^{I_{n+1},\dots,I_N}_{I_1,\dots,I_n}$ & $\R^{I_1\cdots I_n \times I_{n+1}\cdots I_N}$ & mode-$n$ unfolding of $\cX$ \\\hline
    $\!R(\cX)$ & $\R^{I_1 \times I_2\cdots I_N}$ & right unfolding of $\cX$ \\\hline
    $\!L(\cX)$ & $\R^{I_1\cdots I_{N-1} \times I_N}$ & left unfolding of $\cX$ \\\hline
     $\text{TT}\llbracket \cT^{(1)},\dots,\cT^{(N)}\rrbracket$ & $\R^{I_1\tct I_N}$ & Tensor train decomposition \\\hline 
  \end{tabular}
  \label{tab:tensorNotation}
\end{table}

\section{TT-WSINDy\label{sec:WSTINDy}}
We now introduce the TT-WSINDy construction and algorithm. Following the \textit{Multidimensional Approximation of Nonlinear Dynamics (MANDy)} method \cite{GelssKlusEisertEtAl2019JComputNonlinearDyn}, we recast the ``flat'' WSINDy learning problem in  tensor-train format, allowing a large candidate library to be represented without explicitly forming the corresponding feature matrix. We then  combine \textit{modified sequential thresholding least squares} (MSTLS) with TT regression to obtain a sparse model while remaining in the low-rank tensor representation during the coarse regression stage. The weak-formulation provides the noise robustness of WSINDy without sacrificing the tensor structure.

In this work, we consider $\@f := \{f_j\}_{j\in[J]}$ to be a family of \textit{basis functions}, from which we derive the full candidate function library
\begin{equation}
    \@g = \{g : \R^D \rightarrow \R \cond g(\!x) = f_{j_1}(\!x_1)f_{j_2}(\!x_2)\cdots f_{j_D}(\!x_D), (j_1,\dots,j_D) \in [J]^D  \}
    \label{eqn:gFunctionLib}
\end{equation}
where each $f_j : \R \rightarrow \R$. Because $|\@g| = J^D$, the feature matrix $\Theta(\!X)$ defined in \cref{eqn:weakFormDiscrete} suffers from the curse of dimensionality. For large $D$, operations involving $\Theta(\!X)$ may be computationally infeasible. This necessitates an alternate formulation of the WSINDy optimization problem.
\subsection{Problem construction\label{sec:WSINDyTconstruction}}
Given basis functions $\@f = \{f_j\}_{j \in [J]}$ and data points $\!X = \{x_{d,m}\}_{d\in [D], m \in [M]}$ sampled over time points $\!t = \{t_m\}_{m\in[M]}$, define the \textit{feature vector}
\begin{equation}
    \@\theta^{(d)}(t_m) = \begin{bmatrix}
        f_1(x_{d,m}) \\ \vdots \\ f_J(x_{d,m})
    \end{bmatrix} \in \R^J
    \label{eqn:dimmajorfeaturevector}
\end{equation}
containing the evaluations of each basis function $f_j$ at $x_{d,m}$.
Then, define the set of \textit{feature cores}, written as matrices with vector elements, as
\begin{equation}
\begin{split}
    \@\Theta^{(1)} &= \left\llbracket \begin{matrix}
        \@\theta^{(1)}(t_1) & \cdots & \@\theta^{(1)}(t_M) 
    \end{matrix} \right\rrbracket \in \R^{J \times M} \\
    \@\Theta^{(d)} &= \left\llbracket \begin{matrix}
        \@\theta^{(d)}(t_1) & & 0 \\
        & \ddots \\
        0 & & \@\theta^{(d)}(t_M)
    \end{matrix}\right\rrbracket \in \R^{M \times J \times M}; \quad d = 2,\dots,D
\end{split}
\end{equation}
and \textit{weak core}\footnote{Note that we use the same notation as \cref{eqn:Phi}, as both definitions give the same matrix.} as
\begin{equation}
    \@\Phi = \left\llbracket\begin{matrix}
        \@e_1 \star \@\varphi \\ \vdots \\ \@e_M\star \@\varphi
    \end{matrix}\right\rrbracket \in \R^{M \times M'}
\end{equation}
where $\{e_m\}_{m\in[M]}$ is the Euclidean basis of $\R^M$, $\@\varphi$ is the discretization of a test function $\varphi$ on $\!t \cap (a,b)$, $M' = M - B + 1$, and $\star$ denotes \textit{discrete cross-correlation} (see \cref{def:tensorcrosscorrelation}). Over these, we define the \textit{weak feature tensor} as the rank-$(M,\dots,M)$ tensor train
\begin{equation}
    \@\Theta(\!X, \@\varphi) := \text{TT}\left\llbracket \@\Theta^{(1)}, \dots, \@\Theta^{(D)}, \@\Phi \right\rrbracket \in \R^{\overbrace{J \times \cdots \times J}^D \times M'}
\end{equation}
Under this construction, the optimization problem becomes finding a \textit{coefficient tensor} $\widehat{\cW} \in \R^{D \times J\times\cdots\times J}$ such that
\begin{equation}
\begin{split}
    &\widehat{\cW} \text{ is sparse, and} \\
    &\widehat{\cW} \approx \argmin_{\cW} \left\|\!X \star \dot{\@\varphi} + \cW \:\overline{\times}\: \@\Theta(\!X, \@\varphi) \right\|_F 
    \label{eqn:WINSy-ToptimizationProblem}
\end{split}
\end{equation}
Note that we have used the same construction of the feature cores as in \cite{GelssKlusEisertEtAl2019JComputNonlinearDyn}. The differences are the cross-correlation $\!X \star \dot{\@\varphi}$ of the data $\!X$ and the folding of the test function $\varphi$ into the $(D+1)^\text{th}$ core. We show in \cref{sec:WSINDyT=WSINDy} that this allows us to solve the same weak-form problem as in \cref{sec:WSINDy} while preserving the desirable tensor-train structure of the original formulation.

\begin{figure}[htb]
    \centering
    \includegraphics[width=0.8\linewidth]{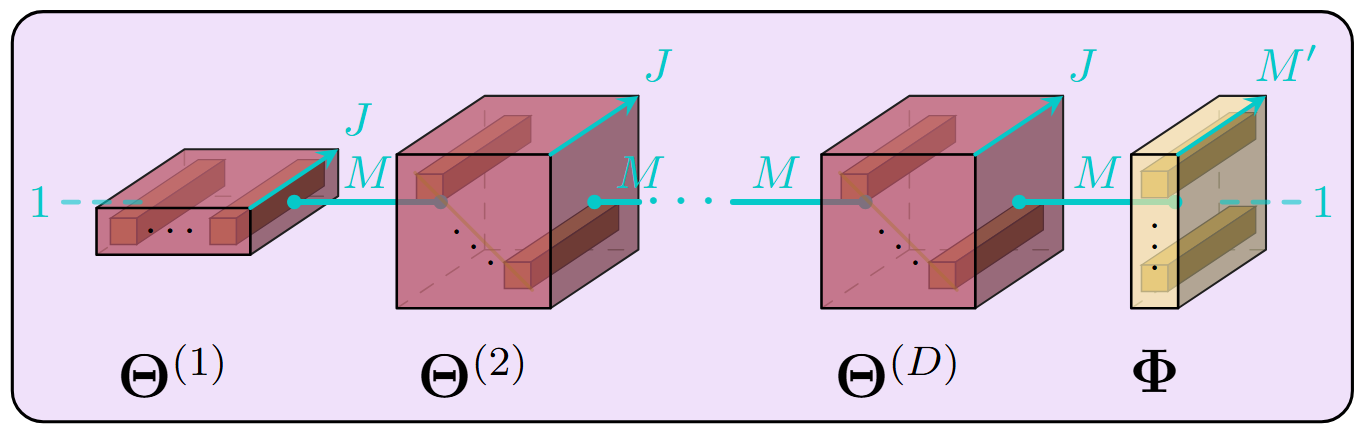}
    \caption{Schematic of the weak feature tensor.}
    \label{fig:weakfeaturetensor}
\end{figure}

Also note that we may alternatively define a feature vector as evaluating a single $f_j$ over every dimension $d \in [D]$. Depending on  $J$ and $D$, this may be preferable to the construction we have shown here. We detail this construction in \cref{sec:functionMajor}.

\subsection{Tensor-train regression}
To solve \cref{eqn:WINSy-ToptimizationProblem}, we introduce the \textit{tensor-train modified sequential thresholding least squares} (TT-MSTLS) algorithm, which adapts the known STLS sparse solve to the tensor train regression problem. First, suppose that we have support
\begin{equation}
    S = \{S^{(1)}, \dots, S^{(D)}\} \subset [J]^D
\end{equation}
Then, we construct the \textit{mask tensor} $\cM(S) \in \R^{J\tct J\times M'}$, a Kruskal tensor \cite{BallardKolda} defined over \textit{mask vectors} $\!m^{(d)} \in \R^J$, as
\begin{equation}
\begin{split}
    &\cM(S) := \left[\bigcirc_{d=1}^{D} \!m^{(d)}\right] \circ \@1_{M'} \\
    &\text{s.t. } \!m^{(d)}[j] = \begin{cases}
        1 & j \in S^{(d)} \\
        0 & \text{otherwise}
    \end{cases} 
    \label{eqn:maskTensor}
\end{split}
\end{equation}
where $\@1_{M'}$ denotes the length-$M'$ vector of ones. To obtain $S_\ell$ from a coefficient tensor $\cW \in \R^{J\tct J}$, we define the \textit{coarse support} of $\cW$ as
\begin{equation}
\begin{split}
    &\text{coarse-supp}(\cW, \lambda) = \big\{S^{(1)},\dots,S^{(D)}\big\} \subset [J]^D \\
    &\text{s.t. } j \in S^{(d)} \iff \left\|\cW\bigg[\overbrace{:\cdots :}^{d-1},j,\overbrace{:\cdots:}^{D-d}\bigg]\right\|_F^2 \ge \lambda 
\end{split}
\end{equation}
In other words, we define support of $\cW$ slicewise. With these constructions, we define the \textit{tensor-train sequential thresholding least squares} (TT-STLS) in \cref{alg:TT-STLS}.

\begin{algorithm}[htb]
\caption{TT-STLS}\label{alg:TT-STLS}
\begin{algorithmic}[1]
    \REQUIRE \quad Weak feature tensor $\@\Theta(\!X,\@\varphi) \in \R^{J\tct J \times M'}$ \\
        \quad\quad\quad LHS vector $\!y \in \R^{M'}$ \\
        \quad\quad\quad Thresholding parameter $\lambda \in \R$
    \ENSURE Coefficient estimate $\cW^\lambda \in \R^{J\tct J}$ \\
        \quad\quad\quad coarse support $S^\lambda \subset [J]^D$
    \STATE Set initial values $\@\Psi_0 := \@\Theta(\!X,\@\varphi)$, $S_0 := [J]^D$
    \FOR{$\ell = 1,\dots,JD$}
        \STATE Compute pseudoinverse $(\@\Psi_{\ell-1})^\dag$
        \STATE Compute weights $\cW_{\ell} \leftarrow \!y \tcont (\@\Psi_{\ell-1})^\dag$
        \STATE Compute support $S_{\ell} \leftarrow \text{coarse-supp}(\cW_\ell,\lambda)$
        \IF {$S_\ell = S_{\ell-1}$}
            \STATE \textbf{break}
        \ENDIF
        \STATE Iterate $\@\Psi_\ell \leftarrow \@\Psi_{\ell-1} \odot \cM(S_\ell)$
    \ENDFOR
    \RETURN $\cW^\lambda := \cW_{\ell},\ S^\lambda := S_{\ell}$
\end{algorithmic}
\end{algorithm}

We now construct the \textit{tensor-train modified  sequential thresholding least squares (TT-MSTLS)} algorithm. This tracks closely with the matrix-based MSTLS algorithm, introduced in \cite{Messenger.Bortz2021JournalofComputationalPhysics}, with adaptations to accommodate the TT structure. As with MSTLS, we perform a line search over a set of thresholding parameters $\@\lambda$ with respect to an auxiliary loss function $\cL$, to find the parameter $\widehat{\lambda}$ which induces the lowest loss. We take the loss function to be
\begin{equation}
    \cL(\lambda) = \frac{\|(\cW^\lambda - \cW^0)\tcont \@\Theta(\!X,\@\varphi)\|_2}{\|\cW^0 \tcont \@\Theta(\!X,\@\varphi)\|_2} + \frac{\|\cW^\lambda\|_0}{J^D}
\end{equation}
where $\cW^\lambda := \text{TT-STLS}(\@\Theta, \!y, \lambda)$ is the coefficient estimate induced by $\lambda$, and $\cW^0 := \!y \tcont \@\Theta^\dag$ is the initial unsparsified TT pseudoinverse solution. The first added term penalizes distance from $\cW^0 \tcont \@\Theta(\!X, \@\varphi)$, and the second encourages sparsity of the discovered coefficient tensor. This is parallel to the matrix MSTLS$(\!A,\!b,\lambda)$ loss function, introduced in \cite{Messenger.Bortz2021JournalofComputationalPhysics}, to solve the linear problem $\!A\!w = \!b$.
\begin{equation}
    \cL_\flt(\lambda) = \frac{\|\!A(\!w^\lambda - \!w^0)\|_2}{\|\!A\!w^0\|_2} + \frac{\|\!w^\lambda\|_0}{\#(\!A)}
\end{equation}
where, similarly, $\!w^\lambda := \text{STLS}(\!A, \!b, \lambda)$ and $\!w^0 := \!A^\dag\!b$. We define TT-MSTLS in \cref{alg:TT-MSTLS}.

\begin{algorithm}[htb]
\caption{\label{alg:TT-MSTLS} TT-MSTLS}
\begin{algorithmic}[1]
    \REQUIRE \quad Weak feature tensor $\@\Theta(\!X,\@\varphi) \in \R^{J\tct J \times M'}$\\
        \quad\quad\quad LHS vector $\!y \in \R^{M'}$ \\
        \quad\quad\quad Set of thresholding parameters $\@\lambda \subset \R$ \\
        \quad\quad\quad Auxiliary loss function $\cL : \R^+ \rightarrow \R^+$
    \ENSURE Coarse support $\widehat{S} \subset [J]^D$
    \FOR{$\lambda \in \@\lambda$}
        \STATE Compute $\cW^\lambda, S^\lambda \leftarrow \text{TT-STLS}(\@\Theta(\!X, \@\varphi),\!y;\lambda)$
        \STATE Compute $\cL(\lambda)$
    \ENDFOR
    \STATE Set $\widehat{\lambda} = \argmin_{\lambda \in \@\lambda} \cL(\lambda)$ \\
    \RETURN $\widehat{S} := S^{\widehat{\lambda}}$
\end{algorithmic}
\end{algorithm} 

\subsection{TT-WSINDy algorithm}
We define TT-WSINDy in \cref{alg:TT-WSINDy} as the two-stage application of TT-MSTLS and MSTLS. TT-WSINDy calls the coarse sparsification procedure TT-STLS to reduce the search space of candidate functions in the TT-induced latent space, then calls the standard matrix MSTLS algorithm. We give a visualization of this procedure in \cref{fig:ttwsindy}. \\
\begin{algorithm}[htb]
\caption{\label{alg:TT-WSINDy} TT-WSINDy}
\begin{algorithmic}[1]
    \REQUIRE \quad Data matrix $\!X \in \R^{D\times M}$, sampled on time points $\@t \subset \R$ \\
        \quad\quad\quad Family of basis functions $\@f = \{f_j\}_{j\in[J]}$ \\
        \quad\quad\quad Test function and derivative $\varphi, \dot\varphi \in C_0((a,b),\R)$ \\
        \quad\quad\quad TT-MSTLS loss function $\cL : \R^+ \rightarrow \R^+$ \\
        \quad\quad\quad MSTLS loss function $\cL_\flt: \R^+ \rightarrow \R^+$ \\
        \quad\quad\quad Sets of thresholding parameters $\@\lambda, \@\lambda_\flt \subset \R$
    \ENSURE Coefficient estimate $\widehat{\!W} \in \R^{D \times J \tct J}$
    \STATE Compute $\@\varphi, \dot{\@\varphi}, $ by discretizing $\varphi, \dot{\varphi}$ on $(a,b) \cap \!t$
    \STATE Construct $\@\Theta(\!X,\@\varphi) \in \R^{J\tct J \times M'}$ from $\!X,\@f,\@\varphi$
    \STATE Compute $\!X \star \dot{\@\varphi} \in \R^{D \times M'}$ \\
    \FOR{$d = 1,\dots, D$}
        \STATE Set $\!y_d \leftarrow (-\!X \star \dot{\@\varphi})[d,:]$
        \STATE Compute coarse support $\widehat{S} \leftarrow \text{TT-MSTLS}(\@\Theta(\!X,\@\varphi), \!y_d; \cL, \@\lambda)$
        \STATE Compute coefficients $\widehat{\!w}_d \leftarrow \text{MSTLS}\big(\!L\big(\@\Theta(\!X,\@\varphi) \odot \cM(\widehat{S})\big), \!y_d; \cL_\flt, \@\lambda_\flt\big)$
    \ENDFOR
    \RETURN $\widehat{\!W}$
\end{algorithmic}
\end{algorithm}
If we omit TT-MSTLS and set $\widehat{S} \leftarrow [J]^D$, TT-WSINDy collapses to WSINDy. We will see that the inclusion of TT-MSTLS can yield substantial memory and time complexity improvements.

\begin{figure}
    \centering
    \includegraphics[width=\linewidth]{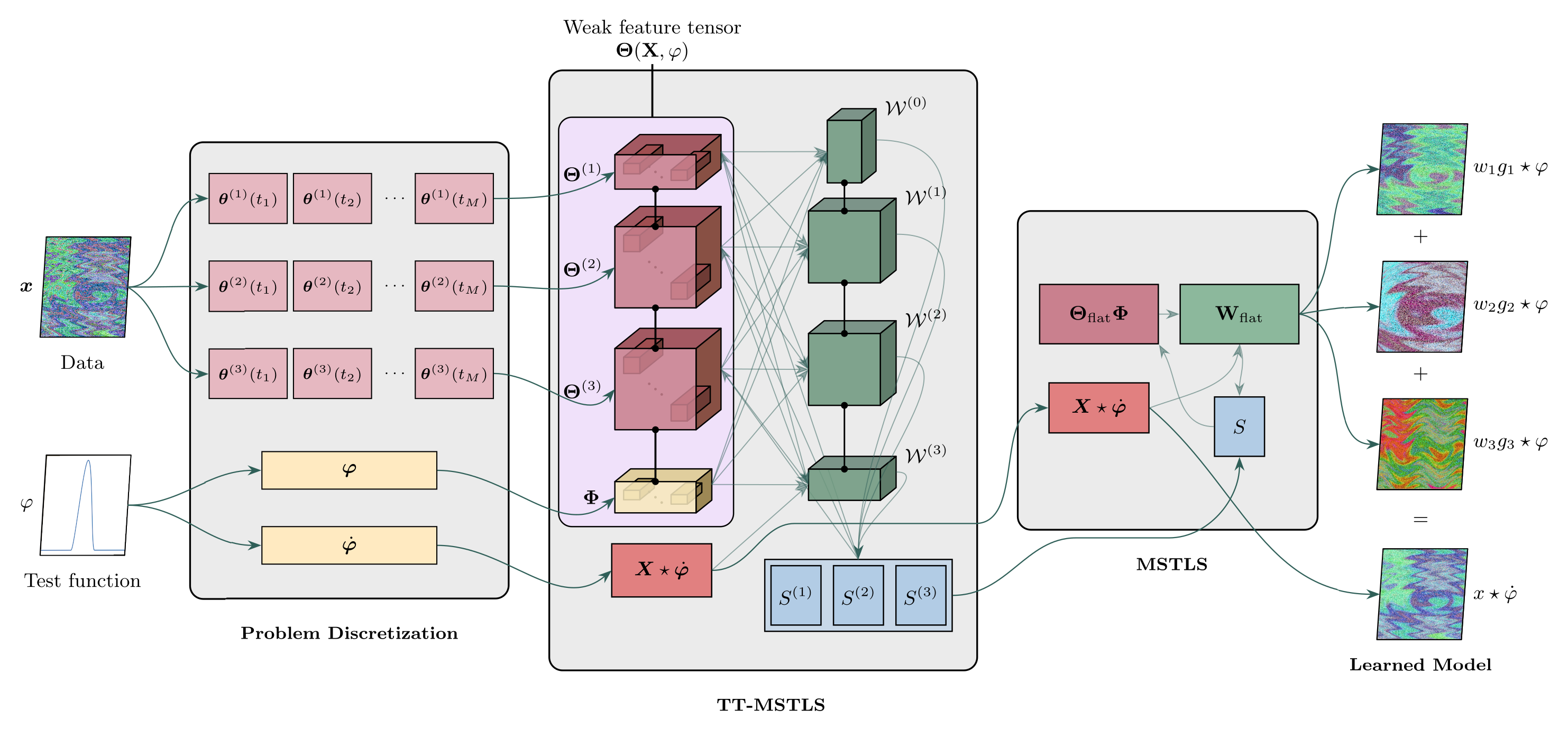}
    \caption{Schematic of the full TT-WSINDy algorithm.}
    \label{fig:ttwsindy}
\end{figure}
                                                                                  
\section{TT-WSINDy agrees with WSINDy\label{sec:WSINDyT=WSINDy}}
In this section, we prove \cref{prop:WSINDyT=WSINDy}, which gives that TT-WSINDy solves the same sparse regression problem as WSINDy, and that the construction is the weak form of the strong form tensor problem.

Define the \textit{strong feature tensor} as $\@\Theta(\!X) := \@\Theta(\!X, [1])$, i.e. replacing the final core with the identity matrix $\!I \in \R^{M \times M}$. And define
\begin{equation}
    \!L(\@\Theta(\!X)) = \@\Theta(\!X)\big|_{J,\dots,J}^M =: \@\Theta_\flt(\!X) \in \R^{J^D \times M}
    \label{eqn:ThetaFlat}
\end{equation}
which is the SINDy feature matrix for the function library $\@g$, defined in \cref{eqn:gFunctionLib}. Now, we establish some technical lemmata. We first give the general definition of discrete cross-correlation. 
\begin{definition}\label{def:tensorcrosscorrelation}
    Let $\cA \in \R^{I_1\tct I_N}$, $\cB \in \R^{J_1 \tct J_N}$. Their \textit{discrete cross-correlation} is denoted $\cA \star \cB \in \R^{H_1 \tct H_N}$, where $H_n = I_n - J_n + 1$ for $n \in [N]$, and is defined element-wise as
    \begin{equation*}
        (\cA \star \cB)[h_1,\dots,h_N] = \sum_{j_1=1}^{J_1} \cdots \sum_{j_N=1}^{J_N} \cA[j_1 + h_1 - 1, \dots, j_N + h_N - 1]\cB[j_1,\dots,j_N]
    \end{equation*}
\end{definition}
We see that the weak-form matrix multiplication is equivalent to taking a discrete cross-correlation of the data with the test function discretization. 
\begin{lemma}
    Take $\@\varphi, \dot{\@\varphi}$ and $\@\Phi, \dot{\@\Phi} \in \R^{M \times M'}$ as above, and let $\!A \in \R^{H \times M}$. Then $\!A\dot{\@\Phi} = \!A \star \dot{\@\varphi}$ and $\!A\@\Phi = \!A \star \@\varphi$.
    \label{lem:MatrixcorrEqualsCorr}
\end{lemma}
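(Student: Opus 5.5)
We prove the statement by comparing entries on both sides. Both $\!A\@\Phi$ and $\!A\star\@\varphi$ lie in $\R^{H\times M'}$, so it suffices to fix $(h,m')\in[H]\times[M']$ and show that the corresponding entries agree. The case of $\dot{\@\Phi}$ and $\dot{\@\varphi}$ is identical with $\@\varphi$ replaced by $\dot{\@\varphi}$, so we treat only $\@\Phi$.

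The first step is to write down the entries of $\@\Phi$ explicitly. Two definitions are available: the banded form \cref{eqn:Phi} and the core form with rows $\@e_m\star\@\varphi$. The footnote asserts that they coincide. We verify this quickly, or simply work from the banded form.

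From \cref{eqn:Phi}, column $m'$ holds $\@\varphi[1],\dots,\@\varphi[B]$ in rows $m',\dots,m'+B-1$, with zeros elsewhere. Hence
\[
\@\Phi[m,m'] = \begin{cases}\@\varphi[m-m'+1] & 1\le m-m'+1\le B,\\ 0 & \text{otherwise.}\end{cases}
\]
For consistency with the core form we regard $\@e_m$ and $\@\varphi$ as row vectors and apply \cref{def:tensorcrosscorrelation} with $N=1$. This is slightly loose, since $\@e_m$ has length $M$ and $\@\varphi$ has length $B$. We obtain $(\@e_m\star\@\varphi)[m'] = \sum_{j=1}^B \@e_m[j+m'-1]\@\varphi[j]$, which equals $\@\varphi[m-m'+1]$ when that index lies in $[B]$ and is zero otherwise. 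So the two definitions agree.

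The second step is the computation itself. By the definition of matrix multiplication,
\[
(\!A\@\Phi)[h,m'] = \sum_{m=1}^M \!A[h,m]\@\Phi[m,m'] = \sum_{m=m'}^{m'+B-1}\!A[h,m]\@\varphi[m-m'+1].
\]
We then reindex with $j=m-m'+1\in[B]$, which gives $\sum_{j=1}^B \!A[h,j+m'-1]\@\varphi[j]$.

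To match this with \cref{def:tensorcrosscorrelation}, the cross-correlation $\!A\star\@\varphi$ must be read with $\@\varphi$ as a $1\times B$ array acting along the second mode of $\!A$. In that case the first mode has $H_1=H-1+1=H$, and its only summation index is $j_1=1$. The definition then yields exactly the same sum, which completes the proof.

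The only real obstacle is this bookkeeping of conventions. We have to interpret $\!A\star\@\varphi$ as a row-wise correlation, with $\@\varphi$ viewed as $1\times B$, so that the dimensions match $H\times M'$ with $M'=M-B+1$. We also have to check that the band's index offset ($\@\varphi[1]$ sitting at row $m'$) is consistent with the $j+h-1$ shift in the definition, so that no flip occurs. A flip would be the signature of convolution rather than correlation.
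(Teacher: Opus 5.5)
Your proof is correct and follows essentially the same route as the paper. Both read off $\bm{\Phi}[m,m'] = \bm{\varphi}[m-m'+1]$ on the band, expand $(\mathbf{A}\bm{\Phi})[h,m']$, reindex with $j=m-m'+1$, and match the result to the cross-correlation definition, with $\bm{\varphi}$ treated as a $1\times B$ array. Your extra check that the rows $\bm{e}_m\star\bm{\varphi}$ reproduce the banded matrix is not needed for this lemma, but it is correct, and it is the identity the paper relies on in the next lemma.
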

\begin{proof} 
We may express $\@\Phi$ entrywise as
\begin{equation}
    \@\Phi[m,m'] = \begin{cases}
        \@\varphi[k] & m = (m' - 1) + k; \; k \in [B] \\
        0 & \text{otherwise}
    \end{cases}
\end{equation}
So we have, using the definitions of matrix multiplication and discrete cross-correlation\footnote{Note that, for compatibility with \cref{def:tensorcrosscorrelation}, we consider $\@\varphi, \dot{\@\varphi} \in \R^{1\times B}$ in this computation. This specifies that $\@\varphi$ is in the time axis only; we do not correlate across the first dimension.},
\begin{align*}
    (\!A\@\Phi)[h,m'] &= \sum_{m=1}^M \!A[h,m]\@\Phi[m,m'] = \sum_{k=1}^B \!A[h, m' + k - 1]\@\varphi[k] = (\!A \star \@\varphi)[h,m']
\end{align*}
and identically for $\dot{\@\varphi}, \dot{\@\Phi}$. 
\end{proof}

We now relate the flat weak-form feature matrix to the weak feature tensor.
\begin{lemma}
    Let $\@\varphi$ and $ \@\Theta(\!X,\@\varphi)$ be defined as in \cref{sec:WSINDyTconstruction}. Then,
    $
        \@\Theta_\flt(\!X) \star \@\varphi = \!L(\@\Theta(\!X,\@\varphi))
    $.\label{lem:RightUnfoldingofFeatureTensor}
\end{lemma}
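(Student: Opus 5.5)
The plan is to compare both sides entrywise, indexed by a multi-index $(j_1,\dots,j_D)\in[J]^D$ (flattened through $b_D$) and a column index $m'\in[M']$. On the left, \cref{lem:MatrixcorrEqualsCorr} with $\!A=\@\Theta_\flt(\!X)$ lets us replace the cross-correlation by the matrix product $\@\Theta_\flt(\!X)\@\Phi$. On the right, the left unfolding $\!L(\@\Theta(\!X,\@\varphi))$ treats the first $D$ modes as the row index and the last mode as the column index. It therefore suffices to show, for every such index,
\[
\@\Theta(\!X,\@\varphi)[j_1,\dots,j_D,m'] = \sum_{m=1}^M \@\Theta(\!X)[j_1,\dots,j_D,m]\,\@\Phi[m,m'].
\]

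First I will compute the entries of the strong feature tensor $\@\Theta(\!X)$ directly from the TT definition \cref{eqn:TensorTrain}. The last core is the identity $\!I\in\R^{M\times M}$, so the rank indices $r_1,\dots,r_D$ all range over $[M]$. Because each core $\@\Theta^{(d)}$ with $d\ge2$ is diagonal in its rank indices, every term with $r_{d-1}\neq r_d$ vanishes, and the sum collapses to the diagonal $r_1=\cdots=r_D=:m$. This gives
\[
\@\Theta(\!X)[j_1,\dots,j_D,m]=\prod_{d=1}^D f_{j_d}(x_{d,m}),
\]
which agrees with \cref{eqn:ThetaFlat}: it is $g_{(j_1,\dots,j_D)}(\!X[:,m])$.

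Next I will apply the same collapse to $\@\Theta(\!X,\@\varphi)$. The first $D$ cores are unchanged, so the rank sum again reduces to a single index $m$. Only the last core differs: in place of the identity, we contract against $\@\Phi[m,:]$, the $m$-th row of the weak core (the row $\@e_m\star\@\varphi$). This yields
\[
\@\Theta(\!X,\@\varphi)[j_1,\dots,j_D,m']=\sum_{m}\Big(\prod_d f_{j_d}(x_{d,m})\Big)\@\Phi[m,m'].
\]
Substituting the entrywise formula for $\@\Phi$ from the proof of \cref{lem:MatrixcorrEqualsCorr}, this equals $\sum_{k=1}^B \@\Theta_\flt(\!X)[\overline{j_1,\dots,j_D},m'+k-1]\,\@\varphi[k]$, which is exactly $(\@\Theta_\flt(\!X)\star\@\varphi)[\overline{j_1,\dots,j_D},m']$.

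I expect the main obstacle to be bookkeeping rather than any conceptual step. First, I must confirm that the rows $\@e_m\star\@\varphi$ in the weak-core definition assemble into the banded matrix \cref{eqn:Phi}, as the footnote asserts, so that $\@\Phi[m,m']=\@\varphi[m-m'+1]$ whenever that index lies in $[B]$. Second, I must make sure the flattening bijection used in $\!L$ matches the row ordering of $\@\Theta_\flt$. Since both are defined through the same $b_D$, the second point holds by convention.
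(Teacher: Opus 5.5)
Your proposal is correct and follows essentially the same route as the paper. Both arguments expand $\!L(\@\Theta(\!X,\@\varphi))$ entrywise, collapse the rank sums through the diagonal cores to $\sum_m \prod_d f_{j_d}(x_{d,m})\,(\@e_m\star\@\varphi)[m']$, use $(\@e_m\star\@\varphi)[m'] = \@\varphi[m-m'+1]$, and then recognize the definition of cross-correlation. You explicitly derive the diagonal collapse and the entries of $\@\Theta_\flt(\!X)$ from the strong tensor, which the paper only asserts. Your opening appeal to \cref{lem:MatrixcorrEqualsCorr} is redundant, since your final substitution verifies the cross-correlation directly.
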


\begin{proof}
Using the element-wise formula for $\@\Theta(\!X,\@\varphi)$, we have
\begin{equation*}
\!L(\@\Theta(\!X,\@\varphi))_{\overline{j_1,\dots,j_D},m'} = \sum_{m=1}^M \left[\prod_{d=1}^D \@\theta^{(d)}(t_m)[j_d]\right](\@e_m \star \@\varphi)[m'] = \sum_{m=1}^M\left[  \prod_{d=1}^D f_{j_d}(x_{d,m})\right]\@\varphi[m - m' + 1]
\end{equation*}
as $(\@e_m \star \@\varphi)[m'] = \sum_{h=1}^M\@\varphi[h + m' - 1]\@e_m[h] = \@\varphi[m - m' + 1]$. And we have
\begin{equation*}
    \prod_{d=1}^D f_{j_d}(x_{d,m}) = \@\Theta_\flt(\!X)\bigg[\overline{j_1,\dots,j_D}, m\bigg]
\end{equation*}
Therefore we have the desired equality by the definition of discrete cross-correlation.
\begin{equation*}
(\!L(\@\Theta(\!X,\@\varphi)))_{\overline{j_1,\dots,j_D},m'} = \sum_{m=1}^M \@\Theta_\flt(\!X)\bigg[\overline{j_1,\dots,j_D}, m\bigg]\@\varphi[m - m' + 1] = (\@\Theta_\flt(\!X) \star \@\varphi)_{\overline{j_1,\dots,j_D},m'}
\end{equation*}
\text{ }
\end{proof}

\begin{lemma}
    Let $\cA \in \R^{A \times B_1 \times \cdots \times B_N}$ and $\cC \in \R^{B_1 \times \cdots \times B_N \times C}$. Then, $\!R(\cA)\!L(\cC) = \cA \tcont \cC$.
    \label{lem:unfoldingproductequalscontraction}
\end{lemma}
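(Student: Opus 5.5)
The plan is a direct entrywise comparison of the two sides, using only the definitions of the unfoldings and of tensor contraction. First I fix the shapes. Since $\cA \in \R^{A \times B_1 \tct B_N}$ has order $N+1$, its right unfolding $\!R(\cA) = \cA_{(1)}$ lies in $\R^{A \times B_1\cdots B_N}$. Since $\cC \in \R^{B_1\tct B_N \times C}$ also has order $N+1$, its left unfolding $\!L(\cC) = \cC_{(N)}$ lies in $\R^{B_1\cdots B_N \times C}$. Both use the same bijection $b_N$ on $[B_1]\tct[B_N]$, so the matrix product $\!R(\cA)\!L(\cC) \in \R^{A\times C}$ is well defined. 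The contraction $\cA \tcont \cC$ is taken over the shared modes $B_1,\dots,B_N$ and also lies in $\R^{A\times C}$.

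Next I write out entry $(a,c)$ of the product:
\[
\big(\!R(\cA)\!L(\cC)\big)[a,c] = \sum_{k=1}^{B_1\cdots B_N} \!R(\cA)[a,k]\,\!L(\cC)[k,c].
\]
The key step is to reindex this sum through the bijection $b_N$, writing $k = \overline{j_1,\dots,j_N}$. Because $b_N$ is a bijection, summing over $k$ is the same as summing over all multi-indices $(j_1,\dots,j_N)$. By the definitions of the unfoldings, $\!R(\cA)[a,\overline{j_1,\dots,j_N}] = \cA[a,j_1,\dots,j_N]$ and $\!L(\cC)[\overline{j_1,\dots,j_N},c] = \cC[j_1,\dots,j_N,c]$. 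The sum therefore becomes $\sum_{j_1,\dots,j_N}\cA[a,j_1,\dots,j_N]\,\cC[j_1,\dots,j_N,c]$. This is exactly $(\cA\tcont\cC)[a,c]$ by the definition of tensor contraction with $L = N$ contracted modes and single outer modes $A$ and $C$.

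There is no real obstacle here. The only point needing care is that both unfoldings flatten the indices $(j_1,\dots,j_N)$ with the same bijection $b_N$. The paper already requires this when it says all ordering-dependent operations are defined with respect to one underlying bijection, and that consistency is what makes the reindexing step valid.
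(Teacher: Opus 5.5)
Your proposal is correct and follows essentially the same route as the paper: expand the entries of $\!R(\cA)\!L(\cC)$ and $\cA \tcont \cC$, and reindex the single flattened sum over $[B_1\cdots B_N]$ as a multi-index sum through the bijection $b_N$. Your observation that both unfoldings must flatten $(j_1,\dots,j_N)$ with the same $b_N$ is exactly what the paper's one-line reindexing implicitly relies on.
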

\begin{proof}
We write $\cA \tcont \cC$ elementwise and reindex the sum using the bijection $b_N$:
\[(\cA \tcont \cC)[a,c] = \sum_{b_1=1}^{B_1}\cdots \sum_{b_N=1}^{B_N} \cA[a,b_1,\dots,b_N]\cC[b_1,\dots,b_N,c] = \sum_{b=1}^{B_1\cdots B_N}\!R(\cA)[a,b]\!L(\cC)[b,c]\]
which is equal to $\big(\!R(\cA)\!L(\cC)\big)[a,c]$.
\end{proof}
\begin{corollary}
    Define $\cW_\flt$ to be the solution to the SINDy matrix problem, as
    \[\!R(\cW) = \cW \big|_D^{J,\dots,J} =: \cW_\flt \in \R^{D \times J^D}\]
    Then,
    $
        \cW \tcont \@\Theta(\!X) = \cW_\flt \@\Theta_\flt(\!X)
    $.
    \label{rmk:MANDy=FlatStrongFormProblem}
\end{corollary}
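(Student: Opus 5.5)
The corollary should follow directly from \cref{lem:unfoldingproductequalscontraction}. I would take $\cA = \cW \in \R^{D \times J \tct J}$, with $A = D$ and $B_1 = \cdots = B_D = J$. For the second factor I would take $\cC = \@\Theta(\!X) \in \R^{J\tct J \times M}$, with $C = M$.

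The lemma then gives
\[
\cW \tcont \@\Theta(\!X) = \!R(\cW)\,\!L(\@\Theta(\!X)),
\]
where the contraction runs over the $D$ shared modes of size $J$.

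It remains to identify the two unfoldings. By the definition in the statement, $\!R(\cW) = \cW|_D^{J,\dots,J} = \cW_\flt$. By \cref{eqn:ThetaFlat}, $\!L(\@\Theta(\!X)) = \@\Theta(\!X)|_{J,\dots,J}^M = \@\Theta_\flt(\!X)$. Substituting these gives $\cW \tcont \@\Theta(\!X) = \cW_\flt\@\Theta_\flt(\!X)$.

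The only point needing care is consistency of the index maps. The column index of $\!R(\cW)$ and the row index of $\!L(\@\Theta(\!X))$ must both be computed with the same bijection $b_D$ on $[J]^D$. This holds because all unfoldings in the paper are defined with respect to one fixed ordering. Under that ordering the summation index $b$ in \cref{lem:unfoldingproductequalscontraction} coincides with $\overline{j_1,\dots,j_D}$ in both factors.

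For completeness, I would also check that $\@\Theta_\flt(\!X)$ really is the SINDy feature matrix for $\@g$. With the identity as the final core, the unfolding has entries $\prod_d f_{j_d}(x_{d,m})$. This is the same elementwise computation as in the proof of \cref{lem:RightUnfoldingofFeatureTensor} with $\@\varphi$ replaced by a delta, so that $\@e_m \star [1] = \@e_m$. The computation confirms that the identity is in the form claimed.
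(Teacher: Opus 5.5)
Your proposal is correct and matches the paper's proof: the paper also identifies $\!R(\cW)=\cW_\flt$ and $\!L(\@\Theta(\!X))=\@\Theta_\flt(\!X)$ by definition and applies \cref{lem:unfoldingproductequalscontraction} with $\cA=\cW$ and $\cC=\@\Theta(\!X)$. Your final check that $\@\Theta_\flt(\!X)$ is the SINDy feature matrix is not needed for the identity itself, though it is a correct sanity check.
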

\begin{proof}
By definition, $\cW_\flt = \!R(\cW)$ and $\@\Theta_\flt(\!X) = \!L(\@\Theta(\!X))$. Thus the result follows immediately from \cref{lem:unfoldingproductequalscontraction}. 
\end{proof}
In other words, $\cW$ solves the strong-form tensor problem $\dot{\!X} = \cW \tcont \@\Theta(\!X)$ if and only if $\cW_\flt$ solves the SINDy problem $\dot{\!X} = \cW_\flt\@\Theta_\flt(\!X)$. We may now show the main technical result of this section, described below.

\begin{proposition}
    Let $\!X, \@\varphi$, and $\dot{\@\varphi}$ be defined as in \cref{sec:WSINDy}, $\@\Theta_\flt(\!X)$ as in \cref{eqn:ThetaFlat}, and $\@\Theta(\!X,\@\varphi)$ as in \cref{sec:WSINDyTconstruction}. We have that if $\cW$ solves the TT-WSINDy optimization problem, then $\!R(\cW)$ solves the equivalent WSINDy optimization problem. I.e.,
    \begin{equation}
        \!R\left( \argmin_{\cW} \|\!X\dot{\@\Phi} + \cW \tcont \@\Theta(\!X,\@\varphi)\|_F \right) = \argmin_{\!W} \|\!X\dot{\@\Phi} + \!W \@\Theta_\flt(\!X)\@\Phi\|_F
        \label{eqn:TT-WSINDy=WSINDy}
    \end{equation}
    From which it follows that the construction in \cref{sec:WSINDyTconstruction} does indeed describe the weak form of the MANDy construction given in \cite{GelssKlusEisertEtAl2019JComputNonlinearDyn}.
    \label{prop:WSINDyT=WSINDy}
\end{proposition}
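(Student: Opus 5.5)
The plan is to show that the two objective functions coincide under the identification $\!W = \!R(\cW)$. Since $\!R$ is a bijection between $\R^{D\times J\tct J}$ and $\R^{D\times J^D}$ that merely reindexes entries, the argmin sets then correspond exactly. Concretely, I would prove that for every $\cW$,
\[
\cW \tcont \@\Theta(\!X,\@\varphi) = \!R(\cW)\,\@\Theta_\flt(\!X)\,\@\Phi .
\]
The left-hand terms $\!X\dot{\@\Phi}$ agree trivially, and by \cref{lem:MatrixcorrEqualsCorr} they also equal $\!X\star\dot{\@\varphi}$, which reconciles \cref{eqn:TT-WSINDy=WSINDy} with \cref{eqn:WINSy-ToptimizationProblem}. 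Given this identity, the Frobenius norms inside the two argmins agree pointwise as functions of $\cW$.

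The identity follows from a chain of earlier results. First, \cref{lem:unfoldingproductequalscontraction} with $\cA=\cW$ and $\cC=\@\Theta(\!X,\@\varphi)$ gives $\cW\tcont\@\Theta(\!X,\@\varphi) = \!R(\cW)\,\!L(\@\Theta(\!X,\@\varphi))$. Next, \cref{lem:RightUnfoldingofFeatureTensor} rewrites $\!L(\@\Theta(\!X,\@\varphi)) = \@\Theta_\flt(\!X)\star\@\varphi$. Finally, \cref{lem:MatrixcorrEqualsCorr} applied with $\!A=\@\Theta_\flt(\!X)$ (here $H=J^D$) converts this into $\@\Theta_\flt(\!X)\@\Phi$. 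Composing the three steps yields the identity.

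To conclude, let $f(\!W)=\|\!X\dot{\@\Phi}+\!W\@\Theta_\flt(\!X)\@\Phi\|_F$. The TT objective equals $f\circ\!R$. Because $\!R$ is bijective, $\cW$ minimizes $f\circ\!R$ if and only if $\!R(\cW)$ minimizes $f$. This gives set equality $\!R(\argmin(f\circ\!R))=\argmin f$, which handles non-uniqueness when $\@\Theta_\flt\@\Phi$ is rank deficient. The final remark follows by taking $\@\varphi=[1]$, so that $\@\Phi=\!I$: the identity then reduces to \cref{rmk:MANDy=FlatStrongFormProblem}, which is the MANDy strong-form problem. The weak construction is therefore exactly that problem right-multiplied by the test-function matrix $\@\Phi$, with the left-hand side $\dot{\!X}$ replaced by $-\!X\dot{\@\Phi}$ via integration by parts as in \cref{eqn:weakFormDiscrete}.

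The main obstacle is bookkeeping rather than substance. One must check that the bijection $b_D$ used in $\!R(\cW)$ matches the one used in $\!L(\@\Theta)$, so that \cref{lem:unfoldingproductequalscontraction} applies with consistent orderings. One must also check that the contraction in $\cW\tcont\@\Theta$ runs over exactly the $D$ modes of size $J$, leaving $D\times M'$. I would state explicitly that all unfoldings share the same bijection, as the paper stipulates.
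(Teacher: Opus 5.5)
Your proposal is correct and follows essentially the same route as the paper. It chains \cref{lem:unfoldingproductequalscontraction}, \cref{lem:RightUnfoldingofFeatureTensor}, and \cref{lem:MatrixcorrEqualsCorr} to get $\cW\tcont\@\Theta(\!X,\@\varphi) = \!R(\cW)\,\@\Theta_\flt(\!X)\,\@\Phi$, and then obtains the MANDy remark from \cref{rmk:MANDy=FlatStrongFormProblem} by right-multiplying by $\@\Phi$. Your explicit transfer of the argmin sets through the bijectivity of $\!R$, which covers non-unique minimizers, is a useful step the paper leaves implicit.
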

\begin{proof}
Using \cref{lem:MatrixcorrEqualsCorr,lem:RightUnfoldingofFeatureTensor,lem:unfoldingproductequalscontraction}, we have
\begin{align*}
    (\cW_\flt\@\Theta_\flt(\!X))\@\Phi &= \!R(\cW)(\@\Theta_\flt(\!X)\@\Phi) \\
    &= \!R(\cW) \left[\@\Theta_\flt(\!X) \star \@\varphi\right] \\
    &= \!R(\cW)\!L(\@\Theta(\!X,\@\varphi)) \\
    &= \cW \tcont \@\Theta(\!X,\@\varphi)
\end{align*}
Which gives \cref{eqn:TT-WSINDy=WSINDy}. The conclusion follows from \cref{rmk:MANDy=FlatStrongFormProblem}; the weak form of $\dot{\!X} = \cW \tcont \@\Theta(\!X)$ is obtained by multiplying $\@\Phi$ on both sides:
\begin{equation*}
    -\!X\dot{\@\Phi} = (\cW \tcont \@\Theta(\!X))\@\Phi = \cW_\flt \@\Theta_\flt(\!X) \@\Phi = \cW \tcont \@\Theta(\!X,\@\varphi)
\end{equation*}
\end{proof}
We have shown that TT-WSINDy solves the same weak regression problem as WSINDy. In the next section, we show that TT-WSINDy yields significant computation and memory savings when discovering high-dimensional ODEs.
 
\section{TT-WSINDy outperforms WSINDy in many dimensions\label{sec:TTwsindySpeed}}
In this section, we will demonstrate that TT-WSINDy can circumvent the curse of dimensionality. We will show that TT-STLS has both a time and memory complexity that is polynomial in $J,D$, and $M$, in contrast to the matrix ordinary least squares solve. We then show that under reasonable assumptions on sparsity of true support in the total search space, we may expect that the size of the problem is significantly reduced by TT-STLS. We also give an alternative algorithm for construction of the weak feature tensor when $M$ is large.

\subsection{TT-STLS time and memory complexity}
The efficiency of the TT-STLS algorithm relies on the ability to perform all requisite operations in the TT format, never forming the full, size $\cO(J^DM)$ tensor. This is reliant on the following fact.
\begin{lemma}
    Let $\cA, \cB \in \R^{I_1 \tct I_{N+1}}$ be rank $(A_1,\dots,A_N)$ and $(B_1,\dots,B_N)$ tensor trains, respectively. Then, the Hadamard product $\cC = \cA \odot \cB$ is a rank-$(A_1B_1,\dots,A_NB_N)$ tensor train. Namely, we define $\cC$ in terms of slices of the cores of $\cA,\cB$ as
    \[\cC^{(n)}[:,i_n,:] = \cA^{(n)}[:,i_n,:] \otimes \cB^{(n)}[:,i_n,:], \quad n \in [N+1], i_n \in [I_n]\]
    where $\otimes$ denotes the Kronecker product. \cite{Oseledets2011SIAMJSciComput} \label{lem:TTHadamard}
\end{lemma}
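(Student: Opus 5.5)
The plan is to check the claim entrywise, using the slice form of the tensor-train definition \cref{eqn:TensorTrain}. For fixed indices $(i_1,\dots,i_{N+1})$, the sum over rank indices in \cref{eqn:TensorTrain} is exactly a product of matrices. So we first write
\[
\cA[i_1,\dots,i_{N+1}] = \cA^{(1)}[:,i_1,:]\,\cA^{(2)}[:,i_2,:]\cdots\cA^{(N+1)}[:,i_{N+1},:],
\]
a product of matrices of sizes $1\times A_1$, $A_1\times A_2$, $\dots$, $A_N\times 1$, which is therefore a $1\times 1$ matrix, i.e.\ a scalar. The same formula holds for $\cB$ with the $\cB$ cores. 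This reformulation is routine: expanding the matrix product gives precisely the nested sum over $r_0,\dots,r_{N+1}$, with $R_0=R_{N+1}=1$.

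Next, we use the Hadamard product definition $\cC[i_1,\dots,i_{N+1}] = \cA[i_1,\dots,i_{N+1}]\,\cB[i_1,\dots,i_{N+1}]$. For scalars, $ab = a\otimes b$, so
\[
\cC[i_1,\dots,i_{N+1}] = \Big(\prod_{n} \cA^{(n)}[:,i_n,:]\Big)\otimes\Big(\prod_{n} \cB^{(n)}[:,i_n,:]\Big).
\]
We then apply the mixed-product property $(\!P\!Q)\otimes(\!U\!V) = (\!P\otimes\!U)(\!Q\otimes\!V)$ repeatedly, by induction on the number of factors. Its dimension hypotheses hold because consecutive slices are conformable in both trains. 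This gives
\[
\cC[i_1,\dots,i_{N+1}] = \prod_{n=1}^{N+1}\big(\cA^{(n)}[:,i_n,:]\otimes\cB^{(n)}[:,i_n,:]\big).
\]

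Finally, we read off the result. The $n$th factor has size $A_{n-1}B_{n-1}\times A_nB_n$, with $A_0B_0 = A_{N+1}B_{N+1} = 1$. So the cores $\cC^{(n)}\in\R^{A_{n-1}B_{n-1}\times I_n\times A_nB_n}$ defined by the stated Kronecker slices reproduce $\cC$ through the same matrix-product formula, and by the equivalence in the first step this is a TT representation with ranks $(A_1B_1,\dots,A_NB_N)$. The only place needing care is bookkeeping: the mixed-product induction and the consistency of how the Kronecker product orders the combined rank index $\overline{a,b}$. Both are standard, so there is no real obstacle.
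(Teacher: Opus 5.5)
Your proof is correct and is essentially the standard argument from the reference the paper cites; the paper gives no proof of its own and defers to Oseledets. That argument writes each entry as a product of core slices, treats the product of two scalars as their Kronecker product, and applies the mixed-product property to obtain the Kronecker-slice cores. The ranks $(A_1B_1,\dots,A_NB_N)$ you read off are the sizes of this representation, which matches the paper's definition of TT ranks; they are upper bounds on the minimal TT ranks, not necessarily equal to them.
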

Using this, we have that TT structure is preserved under masking.
\begin{corollary}
    Every feature tensor iterate $\@\Psi_{\ell}$ is a rank-$(M,\dots,M)$ tensor train, with cores given by
    \[\@\Psi_\ell^{(d)} = \left\llbracket \begin{matrix}
        \widetilde{\@\theta}_\ell^{(d)}(t_1) && 0 \\
        &\ddots \\
        0&& \widetilde{\@\theta}_\ell^{(d)}(t_M)
    \end{matrix}\right\rrbracket \in \R^{M\times J \times M}\]
    for $d \in [D]$, where
    \begin{equation*}
    \begin{split}
        &\widetilde{\@\theta}^{(d)}_\ell(t_m) \in \R^J \\
        & \text{s.t. } \widetilde{\@\theta}^{(d)}_\ell(t_m)[j] = \begin{cases}
            f_j(x_{d,m}) & j \in S^{(d)}_{\ell} \\
            0 & \text{otherwise}
        \end{cases}
    \end{split}
    \end{equation*}
\end{corollary}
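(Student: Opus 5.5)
We argue by induction on $\ell$, applying \cref{lem:TTHadamard} at each step. Since the statement concerns the feature cores $d\in[D]$, we work with the feature-core part of the train. The weak core $\@\Phi$ is carried along unchanged: the last factor of the mask tensor $\cM(S_\ell)$ is $\@1_{M'}$, so masking does not alter it.

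The base case $\ell=0$ is the definition $\@\Psi_0=\@\Theta(\!X,\@\varphi)$. There $S_0=[J]^D$, so $\widetilde{\@\theta}^{(d)}_0(t_m)=\@\theta^{(d)}(t_m)$, and the cores are exactly the block-diagonal feature cores. For $d=1$ the core has $R_0=1$, so the ``diagonal'' degenerates to the row $\llbracket \widetilde{\@\theta}^{(1)}_\ell(t_1)\ \cdots\ \widetilde{\@\theta}^{(1)}_\ell(t_M)\rrbracket$. We read the statement for $d=1$ in this sense.

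For the inductive step, write $\cM(S_\ell)$ as a rank-$(1,\dots,1)$ tensor train. This is possible because it is a single outer product (a Kruskal tensor of CP rank one). Its cores are $\cM^{(d)}[1,:,1]=\!m^{(d)}_\ell$ for $d\in[D]$ and $\cM^{(D+1)}[1,:,1]=\@1_{M'}$. By \cref{lem:TTHadamard}, $\@\Psi_\ell=\@\Psi_{\ell-1}\odot\cM(S_\ell)$ is then a tensor train of rank $(M\cdot 1,\dots,M\cdot 1)=(M,\dots,M)$. Its core slices are $\@\Psi_\ell^{(d)}[:,j,:]=\@\Psi_{\ell-1}^{(d)}[:,j,:]\otimes \!m^{(d)}_\ell[j]$. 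Since the Kronecker product with a $1\times1$ matrix is scalar multiplication, each slice is multiplied by $\!m^{(d)}_\ell[j]$.

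Applied to the inductive hypothesis, the slice $\@\Psi_{\ell-1}^{(d)}[:,j,:]$ is the diagonal matrix $\mathrm{diag}\big(\widetilde{\@\theta}^{(d)}_{\ell-1}(t_m)[j]\big)_{m}$. Multiplying it by $\!m^{(d)}_\ell[j]$ zeros out the entries with $j\notin S^{(d)}_\ell$. The new vector entries are therefore $f_j(x_{d,m})\,\mathbb{1}[j\in S^{(d)}_{\ell-1}]\,\mathbb{1}[j\in S^{(d)}_\ell]$. The final core $\@\Phi\otimes 1=\@\Phi$ is unchanged.

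The step that needs care is reconciling the accumulated masks with the stated formula, which refers only to $S^{(d)}_\ell$. We need $S^{(d)}_\ell\subseteq S^{(d)}_{\ell-1}$, i.e.\ that the supports are nested. I would establish this from \cref{alg:TT-STLS}. Because the columns of the left unfolding of $\@\Psi_{\ell-1}$ indexed outside the support vanish, the pseudoinverse solution $\cW_\ell=\!y\tcont(\@\Psi_{\ell-1})^\dag$ is minimum-norm. Via $\!R(\cT^\dag)=\!L(\cT)^\dag$, it therefore has zero slices $\cW_\ell[\dots,j,\dots]$ for $j\notin S^{(d)}_{\ell-1}$. Hence those $j$ fail the threshold test whenever $\lambda>0$, giving $S^{(d)}_\ell\subseteq S^{(d)}_{\ell-1}$.

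If one prefers to avoid this argument, note that idempotence of $0/1$ masks gives $\@\Psi_\ell=\@\Theta\odot\cM(S_1)\odot\cdots\odot\cM(S_\ell)$. The entries then involve the intersection $\bigcap_{k\le\ell}S^{(d)}_k$, which equals $S^{(d)}_\ell$ under nestedness. Either way, the rank claim follows immediately from \cref{lem:TTHadamard}, and only the identification of the support requires this extra observation.
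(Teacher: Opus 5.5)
Your proof is correct and is essentially the paper's argument: write $\cM(S_\ell)$ as a rank-$(1,\dots,1)$ tensor train and apply \cref{lem:TTHadamard}, using nestedness of the masks so that the accumulated mask reduces to $\cM(S_\ell)$. The paper only asserts this nestedness, so your justification of it is a genuine addition. Zero rows of $\!L(\@\Psi_{\ell-1})$ (rows rather than columns, in the paper's $J^D\times M'$ convention) force zero slices of the minimum-norm solution $\cW_\ell$, which gives $S_\ell\subseteq S_{\ell-1}$ for $\lambda>0$; for $\lambda\le 0$ the loop simply stops at $\ell=1$. This parallels what the paper proves later in \cref{cor:maskapplicationlimitssupport}.
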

\begin{proof}
Let $\cM_\ell = \cM(S_\ell)$. Note that $\@\Psi_\ell := \@\Psi_{\ell-1} \odot \cM_{\ell} = \@\Theta(\!X,\@\varphi)\odot \cM_\ell$, following from the fact that the masks $\cM_\ell \subset \dots \subset \cM_2 \subset \cM_1$ are nested. And, the mask tensor can be written as a rank-$(1,\dots,1)$ tensor train:
\begin{equation*}
    \cM_{\ell} = \text{TT}\big\llbracket \!m^{(1)}_{\ell},\dots, \!m^{(D)}_{\ell}, \@1_{M'} \big\rrbracket
\end{equation*}
Then, the result follows immediately from \cref{lem:TTHadamard}. \end{proof}

Therefore each $(\@\Psi_\ell)^\dag$ may be computed via a TT pseudoinverse. \cref{lem:TTPI-runtime} then gives that each pseudoinverse in TT-STLS can be computed in $\cO(JDM^3)$ time. And the computational effort of obtaining the Hadamard product $\@\Theta(\!X,\@\varphi) \odot \cM_{\ell}$ amounts to selectively zeroing out elements of $\@\Theta(\!X,\@\varphi)$, which can be no worse than the $\cO(JDM^2)$ storage size of the weak feature tensor.

It remains to discuss the time complexity of computing the coarse support of a coefficient estimate $\cW_\ell$. This computation being polynomial in the dimension parameters of $\cW$ is nontrivial, as the coarse support is taken with respect to the fully-formed tensor representation of $\cW$. If the support were taken only from the entries of the tensor train, we would not observe the effects of regressing against the LHS vector $\!y$, as $\!y \tcont \big(\@\Psi_{\ell-1}\big)^\dag = \cW_\ell$ only contracts $\!y$ into one core of $\big(\@\Psi_\ell\big)^\dag$.

We will now give an algorithm that computes coarse-supp$(\cW_\ell, \lambda)$ with no worse time or memory complexity than the TT pseudoinversion algorithm. To describe this algorithm, we will describe a TT in terms of slices of its cores, rather than vector fibers of its cores, as we have been doing until now. Let
\begin{equation}
    \cW = \text{TT}\big\llbracket \cW^{(1)},\dots,\cW^{(D)}\big\rrbracket \in \R^{J_1\tct J_D}
\end{equation}
where $\cW^{(d)} \in\R^{M_{d-1} \times J_d \times M_d}$. Then, denote the matrix core slices
\begin{equation}
    \!W^{(d)}[j] = \cW^{(d)}[:,j,:], \quad d \in [D], j \in [J_d]
\end{equation}
Now, we may index $\cW$ by the matrix product. The matrix product below evaluates to a scalar, since $\!W^{(1)}[j_1] \in \R^{1\times M_1}$ and $\!W^{(D)}[j_D] \in \R^{M_{D-1}\times 1}$.
\begin{equation}
    \cW[j_1,j_2,\dots,j_D] = \!W^{(1)}[j_1]\!W^{(2)}[j_2]\cdots\!W^{(D)}[j_D]
\end{equation}
Using this, we will formulate an alternate characterization of the coarse support of $\cW$ that relies only on matrix products. First, define the \textit{left/right accumulation matrices} as
\begin{equation}
\begin{split}
    \@\Gamma_L^{(d)} \in \R^{\left[\prod_{i=1}^d J_i\right] \times M_d}, \quad &\big(\@\Gamma_L^{(d)}\big)\left[\overline{j_1,\dots,j_d},\alpha\right] = \bigg(\!W^{(1)}[j_1]\cdots\!W^{(d)}[j_d]\bigg)[\alpha] \\
    \@\Gamma_R^{(d)} \in \R^{M_{d-1} \times \left[ \prod_{i=d}^{D}J_i \right]}, \quad &\big(\@\Gamma_R^{(d)}\big)\left[\beta,\overline{j_d,\dots,j_D}\right] = \bigg( \!W^{(d)}[j_d]\cdots \!W^{(D)}[j_D] \bigg)[\beta]
\end{split}
\end{equation}
and the \textit{left/right density matrices} as
\begin{equation}
    \!D_L^{(d)} = (\@\Gamma_L^{(d)})^T\@\Gamma_L^{(d)}, \quad \!D_R^{(d)} = \@\Gamma_R^{(d)}(\@\Gamma_R^{(d)})^T 
\end{equation}
Then, our characterization of the coarse support is as follows.
\begin{lemma}
    Let $\cW \in \R^{J_1\tct J_D}$ be a rank-$(M_1,\dots,M_{D-1})$ tensor train, $\lambda \in \R$, and 
    $S = \{S^{(1)},\dots,S^{(D)}\} = \text{coarse-supp}(\cW, \lambda)$.
    Then, we have that, for $d \in [D], j \in [J_d]$,
    \begin{equation}
        \left( j \in S^{(d)} \right) \iff \left( \text{tr}\big[ (\!W^{(d)}[j])^T\!D_L^{(d-1)}\!W^{(d)}[j] \!D_R^{(d+1)} \big] \ge \lambda \right)
        \label{eqn:coarse-suppCondition}
    \end{equation}
\end{lemma}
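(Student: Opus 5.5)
The plan is to show that the trace expression in \cref{eqn:coarse-suppCondition} equals the squared Frobenius norm of the slice $\cW[:,\dots,:,j,:,\dots,:]$ (index $j$ in position $d$). Once this identity holds, the equivalence is just the definition of coarse-supp. For the boundary cases we adopt the conventions $\!D_L^{(0)} = [1] \in \R^{1\times 1}$ and $\!D_R^{(D+1)} = [1]$; these are consistent with $M_0 = M_D = 1$ and the empty matrix product.

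First, fix $d$ and $j$ and write every entry of the slice through the matrix-product indexing:
\[\cW[j_1,\dots,j_{d-1},j,j_{d+1},\dots,j_D] = \big(\!W^{(1)}[j_1]\cdots\!W^{(d-1)}[j_{d-1}]\big)\,\!W^{(d)}[j]\,\big(\!W^{(d+1)}[j_{d+1}]\cdots\!W^{(D)}[j_D]\big).\]
By the definition of the accumulation matrices, the left factor is the row $\@\Gamma_L^{(d-1)}[\overline{j_1,\dots,j_{d-1}},:] \in \R^{1\times M_{d-1}}$. The right factor is the column $\@\Gamma_R^{(d+1)}[:,\overline{j_{d+1},\dots,j_D}] \in \R^{M_d\times 1}$. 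Reindexing the free multi-indices with the bijection $b_N$, as in the proof of \cref{lem:unfoldingproductequalscontraction}, the slice, viewed as a matrix with rows indexed by $\overline{j_1,\dots,j_{d-1}}$ and columns by $\overline{j_{d+1},\dots,j_D}$, is exactly
\[\!S_{d,j} := \@\Gamma_L^{(d-1)}\,\!W^{(d)}[j]\,\@\Gamma_R^{(d+1)}.\]
Reshaping does not change the Frobenius norm, so $\|\cW[\dots,j,\dots]\|_F^2 = \|\!S_{d,j}\|_F^2$.

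Second, expand $\|\!S_{d,j}\|_F^2 = \text{tr}(\!S_{d,j}^T\!S_{d,j})$. Then use the cyclic property of the trace to move $\@\Gamma_R^{(d+1)}$ from the front to the back:
\[\text{tr}\big[(\@\Gamma_R^{(d+1)})^T(\!W^{(d)}[j])^T(\@\Gamma_L^{(d-1)})^T\@\Gamma_L^{(d-1)}\!W^{(d)}[j]\@\Gamma_R^{(d+1)}\big] = \text{tr}\big[(\!W^{(d)}[j])^T\!D_L^{(d-1)}\!W^{(d)}[j]\!D_R^{(d+1)}\big],\]
which is the desired expression. Comparing with $\lambda$ then gives the stated equivalence directly from the definition of coarse-supp.

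The main obstacle is bookkeeping rather than anything conceptual. The first point is checking that the factorization of the slice is correct: the row/column multi-index orderings must match the bijection used in defining $\@\Gamma_L$ and $\@\Gamma_R$, which any consistent ordering ensures. The second point is handling the edge cases $d=1$ and $d=D$, where one accumulation matrix is replaced by the scalar $1$ and the slice is a row or column vector. I would verify these two cases separately, then present the general computation.
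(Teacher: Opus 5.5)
Your proposal is correct and matches the paper's proof. The paper also writes the slice as $\@\Gamma_L^{(d-1)}\!W^{(d)}[j]\@\Gamma_R^{(d+1)}$, expands the squared Frobenius norm as a trace, and cycles $\@\Gamma_R^{(d+1)}$ to the back to form $\!D_R^{(d+1)}$. Your explicit boundary conventions $\!D_L^{(0)}=\!D_R^{(D+1)}=1$ and the reindexing remark (harmless, since the Frobenius norm ignores how entries are ordered) only make explicit what the paper leaves implicit.
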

\begin{proof}
The result follows from manipulating the trace using standard identities
\begin{align*}
    \big\|\cW[:\cdots:,j,:\cdots:]\big\|_F^2 &= \big\|\@\Gamma_L^{(d-1)}\!W^{(d)}[j]\@\Gamma_R^{(d+1)}\big\|_F^2 \\
    &= \text{tr}\big[ (\@\Gamma_R^{(d+1)})^T(\!W^{(d)}[j])^T(\@\Gamma_L^{(d-1)})^T\@\Gamma_L^{(d-1)}\!W^{(d)}[j] \@\Gamma_R^{(d+1)} \big] \\
    &= \text{tr}\big[(\!W^{(d)}[j])^T\!D_L^{(d-1)}\!W^{(d)}[j] \@\Gamma_R^{(d+1)}(\@\Gamma_R^{(d+1)})^T \big] \\
    &= \text{tr}\big[ (\!W^{(d)}[j])^T\!D_L^{(d-1)}\!W^{(d)}[j] \!D_R^{(d+1)} \big]
\end{align*}
\end{proof}

So computing the coarse support of $\cW$ amounts to checking the condition \cref{eqn:coarse-suppCondition} for each $d \in [D], j \in [J_d]$. And the left/right density matrices can be precomputed using the following recursions.
\begin{equation}
\begin{split}
    \!D^{(d)}_L &= \sum_{j_d=1}^{J_d} \big(\!W^{(d)}[j_d]\big)^T\!D^{(d-1)}_L\!W^{(d)}[j_d], \quad \!D^{(0)}_L = 1 \\
    \!D^{(d)}_R &= \sum_{j_d=1}^{J_d} \!W^{(d)}[j_d]\!D^{(d+1)}_R\big(\!W^{(d)}[j_d]\big)^T, \quad \!D^{(D+1)}_R = 1
\end{split}
\end{equation}
\Cref{alg:coarse-supp} details the full routine, and we give the full time/memory complexities of TT-STLS in \cref{prop:TT-STLSCost}. Crucially, both are polynomial in the shape parameters of the weak feature tensor, thus subverting the curse of dimensionality.
\begin{algorithm}[htb]
\caption{\label{alg:coarse-supp} coarse-supp}
\begin{algorithmic}[1]
    \REQUIRE \quad Tensor train $\cW \in \R^{J_1\tct J_D}$ \\
        \quad\quad\quad Thresholding parameter $\lambda \in \R$
    \ENSURE Coarse support $S \subset [J]^D$
    \STATE Set initial values $\!D^{(0)}_L, \!D_R^{(D+1)} \leftarrow 1$
    \STATE Accumulate $\!D_R^{(D)},\!D_R^{(D-1)},\dots,\!D_R^{(2)}$ 
    \FOR {$d = 1,\dots, D$}
        \FOR{$j_d = 1,\dots,J_d$}
            \STATE Compute $\!A_{j_d}^{(d)} \leftarrow (\!W^{(d)}[j_d])^T\!D_L^{(d-1)}\!W^{(d)}[j_d]$
            \IF{$\text{tr}\big[ \!A_{j_d}^{(d)} \!D_R^{(d+1)} \big] \ge \lambda$}
                \STATE Append $j_d$ to $S^{(d)}$
            \ENDIF
        \ENDFOR
        \STATE Compute $\!D_L^{(d)} \leftarrow \sum_{j_d=1}^{J_d} \!A^{(d)}_{j_d}$
    \ENDFOR
    \RETURN $S := \{S^{(1)},\dots,S^{(D)}\}$
\end{algorithmic}
\end{algorithm}

\begin{proposition}
    The computational complexity of \cref{alg:TT-STLS} can be estimated as \\ $\cO((JD)^2M^3)$, and the memory complexity can be estimated as $\cO(JDM^2)$. \label{prop:TT-STLSCost}
\end{proposition}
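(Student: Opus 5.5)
The plan is to bound the cost of a single iteration of the loop in \cref{alg:TT-STLS} and then multiply by the number of iterations. The loop runs at most $JD$ times. The reason is that the coarse supports are nested: $S_\ell \subseteq S_{\ell-1}$, since masked slices give zero coefficients and therefore fall below any $\lambda>0$. The loop stops as soon as $S_\ell = S_{\ell-1}$, so every non-terminal iteration removes at least one of the $JD$ pairs $(d,j)$, $d\in[D]$, $j\in[J]$. The loop bound in the algorithm makes this explicit in any case. It therefore suffices to show that one iteration costs $\cO(JDM^3)$ time and $\cO(JDM^2)$ memory.

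For one iteration, I will handle the four steps in turn.

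\textbf{Pseudoinverse.} By the preceding corollary, $\@\Psi_{\ell-1}$ is a rank-$(M,\dots,M)$ tensor train with $D+1$ cores and mode sizes at most $\max(J,M')$. \Cref{lem:TTPI-runtime}, with $N=D$, $I=J$, $R=M$ and $K=M'\le M$, then gives $\cO(JDM^3 + M^3)=\cO(JDM^3)$.

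\textbf{Contraction.} Computing $\!y\tcont(\@\Psi_{\ell-1})^\dag$ only contracts $\!y$ into the $\widetilde{\cT}^{(N)}$ core. This costs $\cO(M'M)$ and yields $\cW_\ell$ as a rank-$M$ tensor train with cores in $\R^{M\times J\times M}$. The $\sigma^{-1}$ scaling is absorbed into that core.

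\textbf{Coarse support.} This uses \cref{alg:coarse-supp}. Each density matrix $\!D_L^{(d)}, \!D_R^{(d)}$ is $M\times M$. Each summand in the recursion, and each $\!A^{(d)}_{j}$, takes two $M\times M$ products, costing $\cO(M^3)$. Each trace $\text{tr}[\!A\!D_R]$ needs only the diagonal, costing $\cO(M^2)$. With $JD$ such terms in total, coarse-supp costs $\cO(JDM^3)$.

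\textbf{Masking.} The update $\@\Psi_\ell = \@\Psi_{\ell-1}\odot\cM(S_\ell)$ only zeroes fibers, as noted after the corollary. Its cost is $\cO(JDM^2)$.

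Summing these, one iteration costs $\cO(JDM^3)$, and over $JD$ iterations the total is $\cO((JD)^2M^3)$.

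For memory, the main step is checking that nothing larger than the cores is ever stored. The iterate $\@\Psi_\ell$ and its orthonormalized cores occupy $D$ cores of size $M\times J\times M$ plus one $M\times M'$ core, for a total of $\cO(JDM^2)$. The SVD workspace in the pseudoinverse acts on one unfolded core at a time, of size $JM\times M$. The tensor train $\cW_\ell$ also takes $\cO(JDM^2)$. In coarse-supp, storing all $D$ right density matrices takes $\cO(DM^2)$, and the $\!A_j^{(d)}$ can be accumulated into $\!D_L^{(d)}$ on the fly. Only the current iterate needs to be kept, since $\@\Psi_\ell=\@\Theta(\!X,\@\varphi)\odot\cM_\ell$ and the masks are $\cO(JD)$. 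This gives $\cO(JDM^2)$ memory overall.

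The main obstacle I expect is the precise accounting inside \cref{lem:TTPI-runtime}. I need to confirm that its $K$ (here $M'$) enters only as $KR^2$ and does not multiply $I$ or $N$. I also need to confirm that the orthonormalization SVDs of $JM\times M$ matrices indeed cost $\cO(JM^3)$ per core. A second point is that ranks do not grow under the Hadamard product with the rank-one mask; \cref{lem:TTHadamard} with $B_n=1$ handles this.
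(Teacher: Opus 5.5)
Your proposal is correct and takes essentially the same approach as the paper. Both arguments bound one loop iteration by $\cO(JDM^3)$, coming from the TT pseudoinverse (\cref{lem:TTPI-runtime}) and the density-matrix evaluation of coarse-supp, multiply by the worst-case $JD$ iterations, and take memory to be dominated by the $\cO(JDM^2)$ cores plus the $\cO(DM^2)$ stored right density matrices. Your additions (the nested-support justification of the iteration count, the $\cO(M^2)$ trace evaluation, and the explicit contraction and masking costs) are valid refinements that do not change the argument.
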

\begin{proof}
The cost of computing all necessary density matrices for the coarse support is $\cO(JDM^3)$. Given these, we compute $\cO(JD)$ traces that each cost $\cO(M^3)$, for a total of $\cO(JDM^3)$. Therefore the $\cO(JDM^3)$ time complexity of computing the coarse support is no worse than the time complexity of computing the TT pseudoinverse, and the full cost of one TT-STLS loop is $\cO(JDM^3)$. The worst case number of iterations is $JD$, for a total time complexity of $\cO((JD)^2M^3)$.

The memory complexity of coarse-supp is dominated by the cost of storing $\!D_R^{(D)},\dots,\!D_R^{(2)}$. To accumulate these, we need never store more than a single $\!W^{(d)}[j_d]$ and intermediate $\cO(M^2)$ matrix. And each $D_{L/R}^{(d)}$ consumes $\cO(M^2)$ storage, so the memory cost of all right density matrices is $\cO(DM^2)$. In computing the traces, we need never store more than a single $\!A_{j_d}^{(d)}, \!W^{(d)}[j_d], \!D_L^{(d-1)}$, and an intermediate $\cO(M^2)$ matrix at a time. Therefore, coarse-supp has a memory complexity of $\cO(DM^2)$. In the TT-STLS loop, we only need store a single $\@\Psi_{\ell}, \big(\@\Psi_\ell\big)^\dag, \cW_\ell, S_\ell$ in addition to the cost of coarse-supp, which all have cost not greater than $\cO(DJM^2)$.
\end{proof}

\begin{remark}
    The $\cO(JD)$ bound on iterations is loose. In practice, the time complexity of TT-STLS is $\cO(JDM^3)$. \label{rmk:iterationsinpractice}
\end{remark}

\begin{remark}
    Assuming a constant number of iterations, the computational complexity of matrix STLS is $\cO(J^DM^2)$. This implies that for  large $M$, say $M \gtrsim J^{D}$, matrix STLS will be preferable in runtime to TT-STLS.
\end{remark}

\subsection{Low-rank feature tensor}
We can often do much better than the complexities given by \cref{prop:TT-STLSCost}, which scale well with respect to the search space shape parameters $J,D$, but poorly with respect to the number of time snapshots $M$. Recall that the weak feature tensor is constructed as a rank-$(M,\dots,M)$ tensor train. However, in certain cases, particularly for large $M$, the true ranks of the feature tensor are much lower. Consider the following lemma.
\begin{lemma}\label{lem:TTranks}
    Let $\cX \in \R^{I_1\tct I_N}$. If, for each mode-$n$ unfolding $\cX_{(n)}$ of $\cX$, we have
    \begin{equation*}
        \text{rank}\left(\cX_{(n)}\right) \leq R_n
    \end{equation*}
    Then there exists a TT decomposition $\cT$ of $\cX$ with ranks not higher than $(R_1,\dots,R_{N-1})$. \cite{Oseledets2011SIAMJSciComput}
\end{lemma}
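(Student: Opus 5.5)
The plan is to prove \cref{lem:TTranks} constructively, following the TT-SVD argument of \cite{Oseledets2011SIAMJSciComput}. We peel off one core at a time, using a rank-revealing factorization of an unfolding. The induction is on the order $N$; the case $N=1$ is trivial, since a single core $\cT^{(1)} = \cX$ with $R_0=R_1=1$ suffices.

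\textbf{Base step.} Consider the first unfolding $\cX_{(1)} \in \R^{I_1 \times I_2\cdots I_N}$, which has rank $r_1 \le R_1$. Factor it as
\[
\cX_{(1)} = \!U\!V,
\]
with $\!U \in \R^{I_1\times r_1}$ having orthonormal columns (say from a truncated SVD, which is exact here) and $\!V = \!U^T\cX_{(1)} \in \R^{r_1 \times I_2\cdots I_N}$. Set the first core $\cT^{(1)}[1,i_1,\alpha] = \!U[i_1,\alpha]$. Then reshape $\!V$ into an order-$(N-1)$ tensor $\cY \in \R^{r_1 I_2 \times I_3 \tct I_N}$ by merging the index $\alpha$ with $i_2$.

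\textbf{Key step: rank control.} We must show that the unfoldings of $\cY$ still satisfy the rank bounds, that is, $\text{rank}(\cY_{(k)}) \le R_{k+1}$. Observe that $\cY_{(k)}$ is obtained by left-multiplying $\cX_{(k+1)}$ by a block matrix built from $\!U^T$. Indeed, since $\!U^T\!U = \!I$, we have $\!V = \!U^T\cX_{(1)}$. Writing this entrywise gives
\[
\cY_{(k)}\big[\overline{\alpha,i_2,\dots,i_{k+1}}, :\big] = \sum_{i_1} \!U[i_1,\alpha]\,\cX_{(k+1)}\big[\overline{i_1,\dots,i_{k+1}},:\big],
\]
so $\cY_{(k)} = (\!U^T\otimes \!I)\cX_{(k+1)}$, up to the fixed ordering $b_N$. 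Consequently $\text{rank}(\cY_{(k)}) \le \text{rank}(\cX_{(k+1)}) \le R_{k+1}$.

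\textbf{Inductive step.} The induction hypothesis now gives a TT of $\cY$ with ranks at most $(R_2,\dots,R_{N-1})$. Its first core has shape $1 \times r_1 I_2 \times R_2'$; reshaping it into $r_1 \times I_2 \times R_2'$ yields the second core of $\cX$. Substituting back through $\cX_{(1)} = \!U\!V$ and comparing entries via the sum in \cref{eqn:TensorTrain} shows that the resulting cores reproduce $\cX$ exactly. Their ranks are $(r_1, R_2',\dots) \le (R_1,\dots,R_{N-1})$.

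The main obstacle is the bookkeeping of the rank-control step. Specifically, we must verify that the merged-index reshaping commutes correctly with the chosen bijection $b_N$, so that $\cY_{(k)}$ is literally a left multiple of $\cX_{(k+1)}$. Since the paper leaves $b_N$ fixed but arbitrary, I would state this as a permutation-invariant identity: row permutations do not change rank. This makes the argument independent of column- or row-major conventions.
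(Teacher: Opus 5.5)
Your argument is correct and is essentially the TT-SVD proof from \cite{Oseledets2011SIAMJSciComput}; the paper gives no proof of its own for this lemma and relies on that citation. The only difference is cosmetic: you take $\!U$ with orthonormal columns from an SVD, so the rank-control step becomes the one-line identity $\cY_{(k)} = (\!U^T\otimes\!I)\cX_{(k+1)}$. Oseledets instead uses a general skeleton factorization and pushes a left inverse of its first factor through a rank factorization of each unfolding.
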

Now, let $\@\Theta_{(d)}$ denote the mode-$d$ unfolding of the weak feature tensor. We have
\begin{equation*}
    \@\Theta_{(d)} \in \R^{J^d \times J^{D-d}M'}, \quad d \in [D]
\end{equation*}
Based on the shape of the unfolding, we immediately have $\text{rank}\left(\@\Theta_{(d)}\right) \leq \min\{J^d, M'J^{D-d}\}$. Depending on the size of the search space and number of snapshots, this can show that the true ranks of $\@\Theta(\!X, \@\varphi)$ are significantly smaller than the original construction suggests. For example, when $J = 3, D = 10$, and $M' \approx M = 6000$, we see in \cref{tab:lowrankbound} that the first $7$ ranks are massively reduced, cutting the total storage from $\approx 1.0\cdot10^9$ to $\approx 7.6\cdot 10^8$.

We now give a memory-efficient algorithm that constructs the low-rank tensor train without forming the original weak feature tensor in memory.

\begin{algorithm}[htb]
\caption{\label{alg:lowrank_featuretensor} Rank-reduced feature tensor construction}
\begin{algorithmic}[1]
    \REQUIRE \quad Data matrix $\!X \in \R^{D\times M}$ \\
        \quad\quad\quad Family of basis functions $\@f = \{f_j\}_{j\in[J]}$ \\
        \quad\quad\quad Test function discretization $\@\varphi$ \\
        \quad\quad\quad Truncation parameter $\epsilon > 0$
    \ENSURE Rank-reduced feature tensor $\@\Theta(\!X,\@\varphi) \in \R^{J\tct J\times M'}$, with ranks $(R_1,\dots,R_D)$
    \STATE \COMMENT{Store basis feature data}
    \FOR {$j \in [J], d \in [D], m \in [M]$}
        \STATE Evaluate $\!B_d[j,m] \leftarrow f_j(x_{d,m})$
    \ENDFOR
    \STATE \COMMENT{Compute first D-1 cores}
    \STATE Set $\!C^{(1)} \leftarrow \@1^{1\times M}$
    \FOR{$d = 1,\dots, D-1$}
        \STATE Compute $\!E^{(d)} \leftarrow \!C^{(d)} \circledcirc \!B_d$ 
        \STATE Compute $\!U^{(d)}\@\Sigma^{(d)}(\!V^{(d)})^T \leftarrow \text{SVD}(\!E^{(d)})$
        \STATE Truncate $\!U^{(d)},\@\Sigma^{(d)},\@V^{(d)}$, using $\epsilon$
        \STATE Set $\widetilde{\@\Theta}^{(d)} := \!L^{-1}(\!U^{(d)}) \in \R^{R_{d-1}\times J \times R_d}$
        \STATE Iterate $\!C^{(d+1)} \leftarrow \@\Sigma^{(d)}(\!V^{(d)})^T \in \R^{R_d\times M}$
    \ENDFOR
    \STATE \COMMENT{Compute final two cores}
    \STATE Compute $\!E^{(D)} \leftarrow \!C^{(D)}\circledcirc \!B_D$
    \STATE Compute $\!U^{(D)}\@\Sigma^{(D)}(\!V^{(D)})^T \leftarrow \text{truncated-SVD}(\!E^{(D)}\@\Phi, \epsilon)$
    \STATE Set $\widetilde{\@\Theta}^{(D)} := \!L^{-1}(\!U^{(D)}) \in \R^{R_{D-1} \times J \times R_D}$
    \STATE Set $\widetilde{\@\Phi} := \!R^{-1}(\@\Sigma^{(D)}(\!V^{(D)})^T) \in \R^{R_D \times M' \times 1}$
    \RETURN $\text{TT}\big\llbracket \widetilde{\@\Theta}^{(1)},\dots,\widetilde{\@\Theta}^{(D)},\widetilde{\@\Phi}\big\rrbracket$
\end{algorithmic}
\end{algorithm}

\begin{lemma}
    \Cref{alg:lowrank_featuretensor} computes a representation of $\@\Theta(\!X,\@\varphi)$, with ranks
    \begin{equation*}
        R_d \leq \min\{J^d,M'J^{D-d}\}, \quad d\in [D]
    \end{equation*}
\end{lemma}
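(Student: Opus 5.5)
We argue by induction over the sweep in \cref{alg:lowrank_featuretensor}, keeping a loop invariant that ties the partially built cores to the flat feature data. Interpret $\circledcirc$ as the column-wise (face-splitting) Kronecker product. For $\!C\in\R^{R\times M}$ and $\!B_d\in\R^{J\times M}$, the product $\!E=\!C\circledcirc\!B_d\in\R^{RJ\times M}$ has entries $\!E[\overline{r,j},m]=\!C[r,m]\!B_d[j,m]$.

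\textbf{Invariant.} Let $\!G^{(d)}\in\R^{J^d\times M}$ denote the matrix with entries $\!G^{(d)}[\overline{j_1,\dots,j_d},m]=\prod_{i\le d}f_{j_i}(x_{i,m})$. Then $\!G^{(D)}=\@\Theta_\flt(\!X)$, and in general $\!G^{(d)}$ is the flat strong feature matrix of the first $d$ coordinates. The claim is that after step $d$,
\[
\!G^{(d)} = \!\Gamma^{(d)}\!C^{(d+1)},
\]
where $\!\Gamma^{(d)}$ is the left accumulation matrix of the cores $\widetilde{\@\Theta}^{(1)},\dots,\widetilde{\@\Theta}^{(d)}$. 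Its columns are orthonormal because each $\!U^{(d)}$ is.

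\textbf{Induction step.} The base case is $\!C^{(1)}=\@1^{1\times M}$, for which $\!G^{(0)}=\!C^{(1)}$. For the step, $\!E^{(d)}=\!C^{(d)}\circledcirc\!B_d$, and one checks entrywise that
\[
(\!\Gamma^{(d-1)}\otimes \!I_J)\,\!E^{(d)} = \!G^{(d)}
\]
up to the index ordering fixed by $b_N$. Substituting the SVD $\!E^{(d)}=\!U^{(d)}\!C^{(d+1)}$ then gives the invariant at step $d$. The final step works the same way. Since $\!E^{(D)}\@\Phi$ satisfies $(\!\Gamma^{(D-1)}\otimes\!I_J)\!E^{(D)}\@\Phi=\@\Theta_\flt(\!X)\@\Phi$, \cref{lem:MatrixcorrEqualsCorr,lem:RightUnfoldingofFeatureTensor} show this equals $\!L(\@\Theta(\!X,\@\varphi))$. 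Hence the returned train represents $\@\Theta(\!X,\@\varphi)$ exactly, or up to the accumulated truncation error when singular values below $\epsilon$ are discarded. With no truncation, or with $\epsilon$ below the smallest nonzero singular value, the representation is exact.

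\textbf{Rank bound.} Each rank satisfies $R_d=\text{rank}(\!E^{(d)})$ (or $\text{rank}(\!E^{(D)}\@\Phi)$ for $d=D$), at most. Because $\!\Gamma^{(d-1)}\otimes\!I_J$ has orthonormal columns, multiplying by it preserves rank. Therefore $R_d=\text{rank}(\!G^{(d)})$ for $d<D$, and $R_D=\text{rank}(\@\Theta_\flt\@\Phi)$. The unfolding $\@\Theta_{(d)}$ factors as $\!G^{(d)}$ times a matrix built from the remaining features and $\@\Phi$. This gives a rank at most that of $\@\Theta_{(d)}$ for the needed direction, and in any case at most the number of rows $J^d$.

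For the second bound $M'J^{D-d}$, this route alone is not enough, since $\!G^{(d)}$ has $M$ columns and not $M'$. Here the plan is to fall back on \cref{lem:TTranks}. The algorithm's SVD sweep realizes the minimal ranks, so $R_d\le\text{rank}(\@\Theta_{(d)})\le\min\{J^d,M'J^{D-d}\}$.

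\textbf{Main obstacle.} The hardest step is justifying that the sweep attains $\text{rank}(\@\Theta_{(d)})$ rather than $\text{rank}(\!G^{(d)})$. When $M>M'J^{D-d}$ these can differ, because the $\@\Phi$ contraction is applied only at the end. I would first try to prove $\text{rank}(\@\Theta_{(d)})=\text{rank}(\!G^{(d)})$ under a mild assumption that $\@\Phi$ is injective on the relevant row space. Failing that, I would state the bound as $R_d\le\min\{J^d,M\}$ and note that it matches the paper's bound in the regime $M\approx M'$, which is the regime used in \cref{tab:lowrankbound}.
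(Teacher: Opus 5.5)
Your representation argument is correct and takes a different route from the paper. The paper shows that \cref{alg:lowrank_featuretensor} takes SVDs of the same matrices as the left-orthonormalization sweep of the TT pseudoinverse (\cref{alg:TTPI}), and then merges the last two cores via \cref{lem:unfoldingproductequalscontraction}. Your invariant $\mathbf{G}^{(d)}=\bm{\Gamma}^{(d)}\mathbf{C}^{(d+1)}$ is self-contained and, more usefully, identifies the ranks exactly. Without truncation, $R_d=\operatorname{rank}(\mathbf{G}^{(d)})\le\min\{J^d,M\}$ for $d<D$, and $R_D=\operatorname{rank}(\bm{\Theta}_{\flt}(\mathbf{X})\bm{\Phi})\le\min\{J^D,M'\}$. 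The gap is the remaining bound $R_d\le M'J^{D-d}$ for $d<D$. Your fallback, that the sweep ``realizes the minimal ranks,'' is false. \Cref{lem:TTranks} cannot supply it either, since it only asserts that \emph{some} TT with those ranks exists. Your own factorization $\bm{\Theta}_{(d)}=\mathbf{G}^{(d)}\mathbf{H}^{(d)}$ (with $\mathbf{H}^{(d)}$ built from the trailing features and $\bm{\Phi}$) gives $\operatorname{rank}(\bm{\Theta}_{(d)})\le R_d$, which is the opposite inequality. So the sentence claiming it bounds $R_d$ ``for the needed direction'' is backwards.

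No argument can close this, because the bound is false for the algorithm's output. Take $J=2$, $\bm{f}=\{1,x\}$, $M=2^{D-1}$, and let the first $D-1$ coordinates of the samples run over $\{0,1\}^{D-1}$. Then $\mathbf{G}^{(D-1)}$ is, up to permutations, a Kronecker power of $\bigl[\begin{smallmatrix}1&1\\0&1\end{smallmatrix}\bigr]$, hence invertible. So the untruncated sweep returns $R_{D-1}=M$, while the claimed bound is $\min\{M,2M'\}$, which is $<M$ whenever $B>M/2+1$. For example, $D=4$, $M=8$, $B=6$ gives $R_3=8>6$, although $\operatorname{rank}(\bm{\Theta}_{(3)})\le 6$. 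The root cause is that a single left-to-right sweep only sees the left interfaces $\mathbf{G}^{(d)}$, and $\bm{\Phi}$ enters only at the last core. Reaching the unfolding ranks would require first right-orthogonalizing the trailing cores, i.e.\ a different algorithm. The paper's proof has the same hole: it establishes the representation and the storage count but never bounds $R_d$ for $d<D$, tacitly importing the bound from \cref{lem:TTranks}.

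What you can prove, and should state, is the bound you already derived: $R_d\le\min\{J^d,M\}$ for $d<D$ and $R_D\le\min\{J^D,M'\}$. This implies the lemma's bound exactly when $\operatorname{rank}(\mathbf{G}^{(d)})\le M'J^{D-d}$. In particular it holds whenever $M\le M'J$ (for $J\ge2$, whenever $B\le M/2+1$), which covers the regime $M'\approx M$ of \cref{tab:lowrankbound}. Drop the injectivity-of-$\bm{\Phi}$ plan. Equality $\operatorname{rank}(\bm{\Theta}_{(d)})=\operatorname{rank}(\mathbf{G}^{(d)})$ is more than you need, and in the only case where it would be needed, $\operatorname{rank}(\mathbf{G}^{(d)})>M'J^{D-d}$, it is impossible by a dimension count.
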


\begin{proof}
We will prove by comparing \cref{alg:lowrank_featuretensor} with the TT pseudoinverse algorithm, given in full detail in \cref{sec:TT-PI}. We first show that the first $D-1$ recomputed cores $\widetilde{\@\Theta}^{(1)},\dots,\widetilde{\@\Theta}^{(D-1)}$ are identical in both algorithms. This amounts to showing that at each of the first $D-1$ iterations, both algorithms compute an SVD of the same matrix. The first iteration computes an SVD of the unmodified first core $\@\Theta^{(1)}$. And since $\!C^{(1)} = \!1^{1\times M}$, we have
\begin{equation*}
    \!E^{(1)} = \!C^{(1)} \circledcirc \!B_1 = \!B_1 \in \R^{J \times M}
\end{equation*}
where $\circledcirc$ is the Khatri-Rao\footnote{Note that the use of the Khatri-Rao product is dependent on $b_2$ being row-major indexing, used by default in NumPy.} product \cite{KoldaBader2009SIAMRevb}. By construction, $\!B_1 = \@\Theta^{(1)}$. Now, note that at the $d^\text{th}$ iteration, the TT pseudoinverse algorithm computes an SVD of the matrix
\begin{equation*}
    \!L\big(\@\Sigma^{(d-1)}(\!V^{(d-1)})^T \tcont \@\Theta^{(d)}\big)
\end{equation*}
Suppose for induction that the matrices $\widetilde{\@\Theta}^{(1)},\dots,\widetilde{\@\Theta}^{(d-1)}$ agree, which gives that $\!C^{(d)} = \@\Sigma^{(d-1)}(\!V^{(d-1)})^T$. Then, we have 
\begin{align*}
    \!L\big(\@\Sigma^{(d-1)}(\!V^{(d-1)})^T \tcont \@\Theta^{(d)}\big)[\overline{r_{d-1},j},m] &= \sum_{\widetilde{m}=1}^M \big(\@\Sigma^{(d-1)}(\!V^{(d-1)})^T\big)[r_{d-1},\widetilde{m}]\@\Theta^{(d)}[\widetilde{m},j,m] \\
    &= \big(\@\Sigma^{(d-1)}(\!V^{(d-1)})^T\big)[r_{d-1},m]f_j(x_{d,m}) \\
    &= \!C^{(d)}[r_{d-1},m]\!B_d[j,m] \\
    &= (\!C^{(d)} \circledcirc \!B_d)[\overline{r_{d-1},j},m]
\end{align*}
Therefore the first $D-1$ cores computed by both algorithms are the same. For the final two cores $\widetilde{\@\Theta}, \widetilde{\@\Phi}$, we deviate from the original pseudoinverse procedure by applying the weak-form matrix $\@\Phi$ to $\!E^{(D)}$, and forming both orthonormalized cores from a single SVD. Using  \cref{lem:unfoldingproductequalscontraction} and our inductively shown formula for $\!E^{(D)}\@\Phi$, we have that this is equivalent to contracting the final two cores, and taking an SVD.
\begin{align*}
    \!L^{-1}(\!E^{(D)}\@\Phi) &= \!L^{-1}\big(\!L(\@\Sigma^{(D-1)}(\!V^{(D-1)})^T\tcont \@\Theta^{(D)})\@\Phi\big) \\
    &= \big(\@\Sigma^{(D-1)}(\!V^{(D-1)})^T\tcont \@\Theta^{(D)}\big)\tcont \@\Phi
\end{align*}
So by computing an SVD of $\!E^{(D)}\@\Phi$ and separating its left- and right-orthonormal components, we do indeed compute an orthonormal representation of the weak feature tensor with total storage cost
\begin{equation}
    \cO\left(JR_1 + J\sum_{d=1}^{D-1}R_dR_{d+1} + R_DM'\right)
\end{equation}
over the original $\cO(JDM^2)$ cost.
\end{proof}

The final step above can save significant time and memory: since $R_D = \cO(\min\{J^D,M'\})$, the naive approach risks taking an SVD of two size $\cO(J^DM)$ matrices, which automatically puts the time complexity of the pseudoinverse computation above the cost of a matrix pseudoinverse (which requires only one $\cO(J^DM)$ SVD). Additionally, the cross-correlation
\begin{equation*}
    \!E^{(D)} \star \@\varphi = \!E^{(D)}\@\Phi \in \R^{R_{D-1}J\times M'}
\end{equation*}
acts as a low-pass filter and can significantly reduce the true rank of $\!E^{(D)}$ prior to computing the SVD. To exploit this, we compute $\text{truncated-SVD}(\!E^{(D)}\@\Phi)$ using the adaptive-rank randomized range finder proposed by Halko, Martinsson, and Tropp \cite{halkoFindingStructureRandomness2011}. In the example in \cref{tab:lowrankbound}, this further reduces the total storage to $\approx 1.1\cdot 10^8$, a full order of magnitude from the original cost.

\begin{table}[h!]
\footnotesize
\caption{The ranks of the weak feature tensor with shape parameters $J = 3, D = 10, M' \approx M = 6000$ and data from the Lorenz 96 model, via both the original construction and the upper bound obtained using \cref{lem:TTranks}.}
\centering
\begin{tabular}{|c | c c c c c c c c c c |} 
     \hline
     $d$ & 1 & 2 & 3 & 4 & 5 & 6 & 7 & 8 & 9 & 10 \\ [0.5ex] 
     \hline
     original ranks & 6000 & 6000 & 6000 & 6000 & 6000 & 6000 & 6000 & 6000 & 6000 & 6000  \\\hline
     $\min\{J^d,M'J^{D-d}\}$ & 3 & 9 & 27 & 81 & 243 & 729 & 2187 & 6561 & 18000 & 6000 \\\hline 
     \cref{alg:lowrank_featuretensor} & 3 & 9 & 27 & 81 & 243 & 729 & 2187 & 3276 & 3544 & 3068 \\\hline 
\end{tabular}
\label{tab:lowrankbound}
\end{table}

\Cref{alg:lowrank_featuretensor} also can yield substantial walltime improvements. \Cref{fig:lowrank_walltime_wrt_M} shows the scaling of the original and low-rank constructions in $M$ and $D$. We observe the tradeoff in opting for the low-rank construction: we exchange some time scaling in $D$ for improved time scaling in $M$ and smaller storage cost. We emphasize that the choice of method is dependent on the shape parameters $J,D,M$, and that neither is optimal in all settings.

\begin{figure}[htb]
    \centering
    \includegraphics[width=\linewidth]{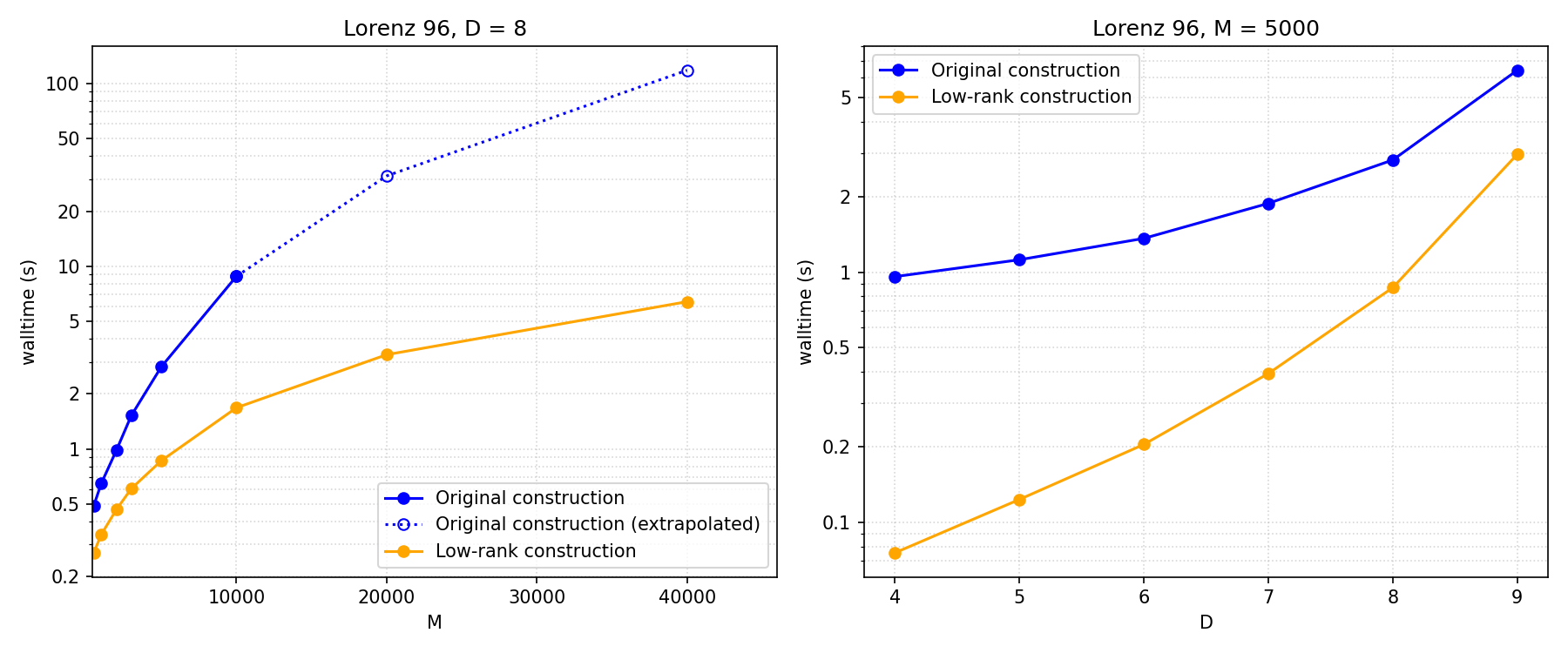}
    \caption{Walltimes of TT-WSINDy using the original construction and the low-rank construction, applied to the Lorenz 96 model over basis features $\@f = \{1,x\}$. The left plot fixes $D = 8$ and times over varying $M$, while the right plot fixes $M = 5 000$ and plots over varying $D$. At $D=8, M = 20 000, 40 000$, the original weak feature tensor was too large to be stored in memory, and the walltime is extrapolated from previous data points.}
    \label{fig:lowrank_walltime_wrt_M}
\end{figure}

\subsection{TT-STLS sparsification}
After execution of TT-MSTLS, TT-WSINDy calls the matrix MSTLS algorithm. Therefore it is crucial to the efficiency of TT-WSINDy that the size of the problem has been substantially reduced by the time it is handed off to the flat solver. There exist pathological cases for which this is not true. For example, if every atomic term $f_j(x_d)$ appears in the true ODE, the MSTLS time complexity reverts to $\cO(J^DM)$ and we lose the benefits of TT-WSINDy. To guard against this, we assume that the true terms of \cref{eqn:linearInParameters} are sparse in the space of basis functions.

\begin{remark}
    \label{rmk:sparsityofbasisfuctions} Say the $d^{\text{th}}$ equation of a $D$-dimensional system of ODEs has true basis terms $\@f^\star_d \subset \@f$, which includes $p_d \leq J$ basis functions applied to $q_d \leq D$ dimensions. We assume that, for all $d \in [D]$
    \begin{equation}
        |\@g_d^\star| = (p_d)^{q_d} \ll J^D = |\@g|
    \end{equation} 
    where $\@g^\star_d \subset \@g$ is the library induced by the basis terms $\@f_d^\star$.
\end{remark}

In other words, the minimal coarse support is substantially smaller than the initial, full support. This assumption is mild due to the exponential size of the total search space of functions. However, if the above remark does not hold, we do not expect TT-WSINDy to be preferred to matrix WSINDy, as the TT-MSTLS stage becomes superfluous.

We have not yet demonstrated explicitly that the masking procedure in TT-STLS reduces the size of the linear problem. We will do so now. First, we show the following lemma.
\begin{lemma}
    Let $\cT \in \R^{I_1\tct I_D \times K}$ be a tensor train with ranks $(R_1,\dots,R_D)$. If the slice \text{ } $\cT^{(d)}[:,i^*,:] = \@0$, then the same slice of the corresponding core of the pseudoinverse $\cT^\dag \in \R^{K \times I_1 \tct I_D}$ is zero.\label{lem:TTPIzeroing}
\end{lemma}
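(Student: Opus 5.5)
The plan is to trace the slice $\cT^{(d)}[:,i^*,:]$ through the TT pseudoinverse algorithm of \cref{sec:TT-PI}, i.e.\ the left-orthonormalization sweep that produces $\widetilde{\cT}^{(1)},\dots,\widetilde{\cT}^{(D)}$ and the final right-orthonormalized core with the singular values $\sigma_{r_D}$. By \cref{eqn:TTPseudoinverse}, the pseudoinverse cores on the modes $I_1,\dots,I_D$ are exactly the left-orthonormalized cores $\widetilde{\cT}^{(d)}$. The $K$-mode core and the scalars $\sigma_{r_D}^{-1}$ only act multiplicatively along the train. It therefore suffices to show that $\widetilde{\cT}^{(d)}[:,i^*,:]=\@0$.

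The sweep proceeds core by core. At step $d$ it forms
\[
\!E = \!L\big(\!C^{(d)}\tcont \cT^{(d)}\big)\in\R^{R'_{d-1}I_d\times R_d},
\]
where $\!C^{(d)}$ is the $\@\Sigma\!V^T$ factor carried over from the previous SVD, as in the proof of the construction lemma above. It then takes $\!E=\!U\@\Sigma\!V^T$, sets $\widetilde{\cT}^{(d)}=\!L^{-1}(\!U)$, and passes $\@\Sigma\!V^T$ on to core $d+1$. The contraction with $\!C^{(d)}$ acts only on the first rank index, so the row block of $\!E$ indexed by $\overline{r,i^*}$ equals $\!C^{(d)}\,\cT^{(d)}[:,i^*,:]=\@0$. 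Thus $\!E$ has a zero row block.

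The key step is the linear-algebra fact that a zero row of $\!E$ forces the same row of $\!U$ to vanish, provided we use the thin (or rank-truncated) SVD with nonzero retained singular values. Since $\!U=\!E\!V\@\Sigma^{-1}$ on the retained part, every row of $\!U$ is a linear combination of the corresponding row of $\!E$. Hence the rows $\overline{r,i^*}$ of $\!U$ are zero, and therefore $\widetilde{\cT}^{(d)}[:,i^*,:]=\!L^{-1}(\!U)[:,i^*,:]=\@0$.

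The other cores $d'\neq d$ are irrelevant to the claim, and the carried factor $\@\Sigma\!V^T$ does not reintroduce nonzeros into slice $i^*$ of core $d$. If $d$ were the last spatial core, which is fused with the $K$-mode in the final SVD, the same row argument applies to $\!L(\!C^{(D)}\tcont\cT^{(D)})$ with the $K$-core absorbed into the columns.

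I expect the main obstacle to be the degenerate case of the SVD. If a full SVD or an implementation pads $\!U$ with columns spanning the orthogonal complement, those extra columns could be nonzero on the zeroed rows. I would handle this by stipulating, consistently with \cref{lem:TTPI-runtime} and the truncation in \cref{alg:lowrank_featuretensor}, that only singular vectors with $\sigma>0$ are kept. Alternatively I would note that any padded columns are multiplied by zero singular values, so they contribute nothing to $\cT^\dag$ after the $\sigma^{-1}$ rescaling, which is restricted to the nonzero $\sigma$. With that convention, the identity $\!U=\!E\!V\@\Sigma^{-1}$ makes the argument immediate.
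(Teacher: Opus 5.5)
Your proof is correct and is essentially the paper's argument. Your identity $\!U=\!E\!V\@\Sigma^{-1}$ is exactly the paper's formula $\widetilde{\cT}^{(d)}=(\!R^{(d-1)}\tcont\cT^{(d)})\tcont(\!R^{(d)})^\dag$ with $\!R^{(d)}=\@\Sigma\!V^T$, so the $i^*$ slice is a product with $\cT^{(d)}[:,i^*,:]=\@0$ in the middle. Your two extra checks make precise what the paper leaves tacit: you restrict to nonzero singular values, which is exactly when $(\@\Sigma\!V^T)^\dag=\!V\@\Sigma^{-1}$ reproduces $\!U$, and you note that the $K$-core enters the last spatial core only by right multiplication.
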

\begin{proof}
Let $\widetilde{\cT}^{(D)}, \widetilde{\cT}^{(1)}, \dots, \widetilde{\cT}^{(D-1)}$ denote the cores of the pseudoinverse $\cT^\dag$, as in \cref{eqn:TTPseudoinverse}. We say that each $\cT^{(d)}, \widetilde{\cT}^{(d)}$ correspond. We may derive an explicit formula for $\widetilde{\cT}^{(d)}$:
\begin{equation*}
    \widetilde{\cT}^{(d)} = (\!R^{(d-1)} \tcont \cT^{(d)}) \tcont (\!R^{(d)})^\dag \in \R^{S_{d-1} \times I_d \times S_d}
\end{equation*}
where $\!R^{(d-1)} := \@\Sigma^{(d-1)}(\!V^{(d-1)})^T \in \R^{S_{d-1}\times R_{d-1}}$ is the upper-triangular component of the QR decomposition of the (left-unfolded) previous core, and $\!R^{(d)} \in \R^{S_d \times R_d}$ is the same, but of $\!R^{(d-1)} \tcont \cT^{(d)}$. Suppose that $\cT^{(d)}[:,i^*,:] = \@0$. We may write the slice $\cT^{(d)}[:,i^*,:]$ as tensor contraction with a Euclidean basis vector $\@e_{i^*}$ in the mode dimension. Letting $\times_2$ denote the 2-mode tensor-vector product \cite{KoldaBader2009SIAMRevb},
\begin{equation*}
    \cT^{(d)}[:,i^*,:] = \cT^{(d)} \times_2 \@e_{i^*} = \@0 \in \R^{R_{d-1} \times R_d}
\end{equation*}
Now we check whether this holds for $\widetilde{\cT}^{(d)}$. Expanding all contractions, we have
\begin{align*}
    (\widetilde{\cT}^{(d)} \times_2 e_{i^*})[s_{d-1},s_d] &= \sum_{i=1}^{I_d} \left[ \sum_{r_{d-1}=1}^{R_{d-1}}\sum_{r_d=1}^{R_d} \!R^{(d-1)}[s_{d-1},r_{d-1}]\cT^{(d)}[r_{d-1},i,r_d](\!R^{(d)})^\dag[r_d,s_d] \right]\@e_{i^*}[i] \\
    &= \sum_{r_{d-1}=1}^{R_{d-1}}\sum_{r_d=1}^{R_d} \!R^{(d-1)}[s_{d-1},r_{d-1}]\cT^{(d)}[r_{d-1},i^*,r_d](\!R^{(d)})^\dag[r_d,s_d]
\end{align*}
Which equals zero for all $s_{d-1},s_d$, since $\cT^{(d)}[r_{d-1},i^*,r_d] = 0$ for all $r_{d-1},r_d$.
\end{proof}

\begin{corollary}\label{cor:maskapplicationlimitssupport}
    Let $\@\Theta(\!X,\@\varphi)$ be the weak feature tensor, and let $\cM = \cM(S) \in \R^{J\tct J\times M'}$ be a mask obtained from some coarse support $S \subset [J]^D$. Let $\@\Theta|_{S} := \@\Theta(\!X,\@\varphi) \odot \cM$, and $\cW := \!y \tcont (\@\Theta|_S)^\dag$. It then follows that
    \begin{equation*}
        \big(j \notin S^{(d)}\big) \Rightarrow \left( \cW\bigg[\overbrace{:\cdots :}^{d-1},j,\overbrace{:\cdots:}^{D-d}\bigg] = \@0 \right)
    \end{equation*}
\end{corollary}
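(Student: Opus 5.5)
The plan is to combine the slice-wise description of the masked cores with \cref{lem:TTPIzeroing}, and then push the zero slice through the contraction with $\!y$ and the TT indexing formula. By \cref{lem:TTHadamard} and the subsequent corollary, $\@\Theta|_S$ is a tensor train whose $d$-th core has slices
\[\@\Theta|_S^{(d)}[:,j,:] = \@\Theta^{(d)}[:,j,:]\,\!m^{(d)}[j].\]
The mask cores have rank one, so the Kronecker product with the scalar $\!m^{(d)}[j]$ is just scalar multiplication. Hence, if $j \notin S^{(d)}$, then $\!m^{(d)}[j]=0$ and the slice $\@\Theta|_S^{(d)}[:,j,:]$ vanishes identically. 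This holds for $d\in[D]$, and the weak core $\@\Phi$ is multiplied by $\@1_{M'}$, so it is unchanged.

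Next, I apply \cref{lem:TTPIzeroing} with $\cT = \@\Theta|_S$, $I_d = J$, $K = M'$ and $i^*=j$. This gives that the corresponding core $\widetilde{\cT}^{(d)}$ of $(\@\Theta|_S)^\dag$ satisfies $\widetilde{\cT}^{(d)}[:,j,:]=\@0$. One point needs care: in \cref{eqn:TTPseudoinverse} the last core of $\cT^\dag$ is the $\@\Phi$-core, carrying mode $M'$, while the cores indexed by $d\in[D]$ are the left-orthonormalized $\widetilde{\cT}^{(1)},\dots,\widetilde{\cT}^{(D)}$ carrying the $J$-modes. I would check against the cyclic-TT footnote that the $D$-th feature core is among those handled by the lemma's formula. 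If it is instead the one that gets right-orthonormalized, the same argument applies, since $\widetilde{\cT}^{(D)}$ is still obtained from $\cT^{(D)}$ by left and right multiplication by matrices acting on the rank indices only.

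Then I form $\cW = \!y\tcont(\@\Theta|_S)^\dag$. The contraction with $\!y$ acts only on the $M'$-mode core, and it produces a tensor train in $\R^{J\tct J}$ whose $J$-mode cores are exactly the $\widetilde{\cT}^{(d)}$. The only new feature is that the first and last cores get absorbed together with the $\sigma^{-1}$ weights, which again act on rank indices only. Writing entries via the matrix-slice product,
\[\cW[j_1,\dots,j_D] = \!W^{(1)}[j_1]\cdots\!W^{(D)}[j_D],\]
with $\!W^{(d)}[j]=\@0$ whenever $j\notin S^{(d)}$, every entry with $j_d=j$ contains a zero matrix factor. Therefore the whole slice $\cW[:\cdots:,j,:\cdots:]$ is $\@0$.

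The main obstacle is the bookkeeping of the cyclic structure of $\cT^\dag$. The boundary ranks are not $1$ and the $\sigma_{r_{D}}^{-1}$ scaling couples the first and last cores, so I must confirm that contracting with $\!y$ and absorbing $\sigma^{-1}$ touch only rank indices and never the $j$-mode of the feature cores. Once this is verified, the conclusion is immediate from the multilinearity of the slice product.
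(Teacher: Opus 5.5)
Your proof is correct and follows the paper's argument: the mask zeroes the core slice $\bm{\Theta}|_S^{(d)}[:,j,:]$, \cref{lem:TTPIzeroing} carries this zero slice over to the corresponding core of $(\bm{\Theta}|_S)^\dagger$, and linearity of the contraction with $\mathbf{y}$ finishes it. The paper writes $\mathcal{W}=\sum_{m'}\mathbf{y}_{m'}(\bm{\Theta}|_S)^\dagger[m',:\cdots:]$ instead of absorbing $\mathbf{y}$ into a core, which is equivalent; your extra check on the $D$-th feature core is a detail the paper leaves implicit, and your resolution is right, since before its SVD that core is only right-multiplied by $\mathbf{U}^{(D+1)}\bm{\Sigma}^{(D+1)}$, which acts on rank indices alone.
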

\begin{proof}
By construction of the mask, $j \notin S^{(d)}$ implies $(\@\Theta|_S)^{(d)}[:,j,:] = \@0$. Then, observing that
\begin{equation*}
    \cW = \!y \tcont (\@\Theta|_S)^\dag = \sum_{m'=1}^{M'} \!y_{m'}(\@\Theta|_S)^\dag[m',:\cdots:]
\end{equation*}
we have that the result follows immediately from \cref{lem:TTPIzeroing}, as $(\@\Theta|_S)^\dag[m',:\cdots:,j,:\cdots:] = \@0$ for all $m' \in [M']$.
\end{proof}

It follows that, given support $S \subset [J]^D$ with $J_d := |S^{(d)}|$ basis features in each dimension, we may reduce the feature tensor to some $\widetilde{\@\Theta} \in \R^{J_1 \tct J_D \times M'}$ that implicitly enforces $\cW$ has at most the same size support as the previous iteration. We conclude with the following corollary, which follows immediately from \cref{cor:maskapplicationlimitssupport}.

\begin{corollary}
    The modes of $\@\Psi_\ell$ are monotonically decreasing in $\ell$.
\end{corollary}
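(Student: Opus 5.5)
We interpret the ``modes'' of $\@\Psi_\ell$ as the effective mode lengths $J^{(d)}_\ell := |S^{(d)}_\ell|$. These are the numbers of nonzero slices $\@\Psi_\ell^{(d)}[:,j,:]$, that is, the size of the reduced tensor $\widetilde{\@\Theta} \in \R^{J_1\tct J_D\times M'}$ described after \cref{cor:maskapplicationlimitssupport}. The claim is then that $S^{(d)}_{\ell} \subseteq S^{(d)}_{\ell-1}$ for every $d\in[D]$ and every iteration $\ell$ of \cref{alg:TT-STLS}. From this, $J^{(d)}_\ell \le J^{(d)}_{\ell-1}$ follows directly. The proof will be an induction on $\ell$.

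For the base case, $S_0 = [J]^D$, so $S_1^{(d)} \subseteq [J] = S_0^{(d)}$ trivially. For the inductive step, fix $\ell \ge 1$ and suppose $\@\Psi_{\ell-1} = \@\Theta(\!X,\@\varphi)\odot\cM(S_{\ell-1})$. This holds for $\ell=1$ with the all-ones mask, and for later $\ell$ it is exactly the form given in the preceding corollary on the cores of $\@\Psi_\ell$. The weights are $\cW_\ell = \!y\tcont(\@\Psi_{\ell-1})^\dag$. Take $j\notin S^{(d)}_{\ell-1}$. By \cref{cor:maskapplicationlimitssupport}, applied with $S = S_{\ell-1}$, the slice $\cW_\ell[:\cdots:,j,:\cdots:]$ vanishes, so its squared Frobenius norm is $0$.

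The point needing care is the thresholding step. For $j\notin S^{(d)}_\ell$ we need $0 < \lambda$, since the coarse support includes $j$ whenever $\|\cdot\|_F^2 \ge \lambda$. So I will assume the thresholding parameter satisfies $\lambda>0$, which is implicit in TT-MSTLS since $\cL:\R^+\to\R^+$. With $\lambda>0$ we get $j\notin S^{(d)}_\ell$, which proves $S^{(d)}_\ell\subseteq S^{(d)}_{\ell-1}$. Equivalently, the nonzero slices of $\cM(S_\ell)$ are contained in those of $\cM(S_{\ell-1})$, which also justifies the nesting of masks used earlier. By \cref{lem:TTHadamard}, $\@\Psi_\ell = \@\Psi_{\ell-1}\odot\cM(S_\ell)$ then has zero slices $\@\Psi^{(d)}_\ell[:,j,:]$ exactly for $j\notin S_\ell^{(d)}$. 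Its effective mode lengths are therefore $|S^{(d)}_\ell|\le|S^{(d)}_{\ell-1}|$, which closes the induction.

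The main obstacle is this threshold edge case: a zero slice escapes the support only when $\lambda>0$. If $\lambda\le 0$ were allowed, $S_\ell$ could regrow to $[J]^D$, and the statement would only hold in the weak sense that the modes stay at $J$. I would state the assumption $\lambda>0$ explicitly. As a side note, monotonicity together with the stopping test $S_\ell = S_{\ell-1}$ means each non-terminal iteration removes at least one index. This also justifies the $JD$ iteration bound in \cref{alg:TT-STLS}.
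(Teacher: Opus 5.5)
Your proposal is correct and follows the paper's argument, which simply invokes \cref{cor:maskapplicationlimitssupport}. Masked core slices are zero, the TT pseudoinverse preserves zero slices, so the corresponding slabs of $\cW_\ell$ have squared norm $0$ and leave the coarse support, giving $S_\ell^{(d)} \subseteq S_{\ell-1}^{(d)}$; your induction also makes explicit the mask nesting that the earlier corollary on the cores of the iterates takes for granted. Your $\lambda>0$ assumption is harmless but not needed: if $\lambda\le 0$ every slab passes the threshold, so $S_1=S_0=[J]^D$ and the loop stops at once (there is no ``regrowth''), and the conclusion holds for every $\lambda$.
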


In the next section, we will observe empirically that TT-MSTLS outputs a much smaller linear problem than it is given. We reserve a more detailed investigation of the reduction in size of the problem for future work.

\section{Examples and results\label{sec:results}}
\subsection{Weak vs. Strong Form TT Regression}
Despite being a lightweight modification to the strong feature tensor,  the weak core grants significant robustness to noise in the ensuing regression computation. In this subsection, we compare weak- and strong-form pseudoinverse computations. We omit sparsification to isolate the effect of the weak form.

For four systems of ODEs, we are given data $\!X \in \R^{D\times M}$, where $\!X[d,m] = \!X^\star[d,m] + \epsilon_{d,m}$, $\!X^\star[d,m] = \!x_d(t_m)$ is the true data and the noise $\{\epsilon_{d,m}\}_{d\in[D],m\in[M]}$ is drawn i.i.d. from $\mathcal{N}(0,\sigma^2)$. The variance $\sigma^2$ is induced by the \textit{noise ratio} $\sigma_{NR}$ as
\begin{equation}
    \sigma^2 = \frac{\sigma_{NR}^2\|\!X^\star\|_F^2}{MD}
\end{equation}
We then perform TT regression to obtain a coefficient estimate $\widehat{\cW}$, and plot the relative coefficient error with respect to the true coefficient tensor $\cW^\star$
\begin{equation}
    err(\widehat{\cW}) = \frac{\|\cW^\star - \widehat{\cW}\|_F}{\|\cW^\star\|_F}
\end{equation}
against $\sigma_{NR}$.

A key difference from \cite{GelssKlusEisertEtAl2019JComputNonlinearDyn} is that we do not assume the ability to directly sample derivative data -- to compute the strong form coefficient estimate, we calculate $\dot{\!X}$ using finite differences. In the weak form, we take $\@\varphi, \dot{\@\varphi}$ to be obtained from the piecewise polynomial test function $\varphi$, given by
\begin{equation}
    \varphi(t) = \begin{cases}
        C(r - t)^p(r + t)^p & t \in [-r,r] \\
        0 & \text{otherwise}
    \end{cases}\label{eqn:polytestfn}
\end{equation}
where $p$ is the degree of the polynomial, $r$ is the radius of support, and $C$ is a normalization constant enforcing $\|\varphi\|_{L_2} = 1$. For details on choosing $p$ and $r$, see \cite{Tran.Bortz2026SIAMJSciComput}. For the second-order FPUT problem, we compute $\ddot{\!X}$ using second-order finite differences to obtain the strong form, and convolve $\!X$ against the second derivative $\ddot{\@\varphi}$ to obtain the weak form.

For the two chaotic systems, we sampled one trajectory, with $10^4$ snapshots for Lorenz 96 and $2\cdot10^5$ for Chua's circuit. For the non-ergodic systems, we sampled multiple trajectories: $6$ with $10^4$ snapshots each for FPUT, and $5$ with $10^4$ snapshots each for the Kuramoto model. For all experiments, we fixed $\Delta t = 0.1$. For robustness, we regressed against 40 different realizations at each noise level.

The ODEs used are the \textit{Fermi-Pasta-Ulam-Tsingou (FPUT) problem}, a quasi-periodic system of oscillators; the \textit{Lorenz 96 model}, a high-dimensional chaotic system; the \textit{Kuramoto model}, a high-dimensional system of coupled oscillators; and \textit{Chua's circuit}, a chaotic system with a nonlinear cross term. The exact dynamics and parameters used are given in \cref{tab:exampleODEs}, and the choice of basis functions and induced library size are given in \cref{tab:examplebasisFns}. Note that the identity $\sin(x\pm y) = \sin(x)\cos(y) \pm \cos(x)\sin(y)$ allows the Kuramoto model to be discoverable via the given three-term basis. Also note that for Chua's circuit to be discoverable without explicitly adding $f(x_d) = x_d|x_d|$ to the basis function library $\@f$, it is necessary to use a \textit{function-major} feature tensor, which we discuss in \cref{sec:functionMajor}.

\begin{table}[h]
\caption{The dynamics, parameters, and constraints for each of the four ODEs used.}
\footnotesize
  \centering
  \begin{tabular}{c c c}
    \hline
    Name & Dynamics & Parameters/constraints  \\\hline 
    && $D = 4$ \\
    FPUT & $\ddot{\!x}_d = (\!x_{d+1} -2\!x_d + \!x_{d-1}) + \beta\big[ (\!x_{d+1} - \!x_d)^3 - (\!x_d - \!x_{d-1})^3 \big]$ & $\beta = 0.7$ \\
    & & $\!x_0 = \!x_{D+1} = 0$ \\\hline
    && $D=5, F=8$ \\
    Lorenz 96 & $\dot{\!x}_d = (\!x_{d+1} - \!x_{d-2})\!x_{d-1} - \!x_d + F$ & $\!x_{-1} = \!x_{D-1}$ \\
    && $\!x_0=\!x_D, \!x_{D+1}=\!x_1$ \\\hline
    && $D=4$ \\
    Kuramoto model & $\dot{\!x}_d = \omega_d + \frac{K}{D}\sum_{d'=1}^D \sin(\!x_{d'} - \!x_d) + h\sin(\!x_d)$ & $K=2, h =\frac12,$ \\
    && $\omega_d = -5 + d\frac{10}D$ \\\hline
    & $\dot{\!x}_1 = \alpha(\!x_2 - (1+\delta_1)\!x_1 - \delta_2x_1|x_1|)$ & $\alpha=10,$ \\
    Chua's circuit & $\dot{\!x}_2 = \!x_1 - \!x_2 + \!x_3$ & $\beta=14.87,$ \\
    & $\dot{\!x}_3 = -\beta\!x_2$ & $\delta=(-\frac87, \frac{4}{63})$ \\\hline
  \end{tabular}
  \label{tab:exampleODEs}
\end{table}

\begin{table}[h]
\caption{The set of basis functions used for each ODE, the (per-equation) search space size induced by that set, and average number of true terms in each dimension.}
\footnotesize
  \centering
  \begin{tabular}{c c c c}
    \hline
    Name & Basis functions & Induced search space size & Mean library size  \\\hline 
    FPUT & $\{1, x, x^2, x^3\}$ & $4^D$ & $10 - 8/D$ \\\hline
    Lorenz 96 & $\{1, x\}$ & $2^D$ & $4$ \\\hline
    Kuramoto model & $\{1, \sin(x), \cos(x)\}$ & $3^D$ & $2D + 1$ \\\hline
    Chua's circuit & $\{1, x, |x|\}$ & $27$ & $2.\overline{3}$ \\\hline
  \end{tabular}
  \label{tab:examplebasisFns}
\end{table}

\Cref{fig:weakvstrongform_all} demonstrates that the weak form provides increased accuracy at both low and high noise levels. In the Lorenz 96, Kuramoto model, and Chua's circuit examples, there is a noise level for which the strong form estimates the dynamics comparably to the weak form. However, we see that the weak form provides robustness over the $[0,1]$ continuum of noise levels.

Due to the large datasets, we used the low-rank feature tensor construction given in \cref{alg:lowrank_featuretensor}. Even for the data-heavy Chua's circuit computation, regression terminated in an average of 1.33 seconds.

\begin{figure}
    \centering
    \includegraphics[width=\linewidth]{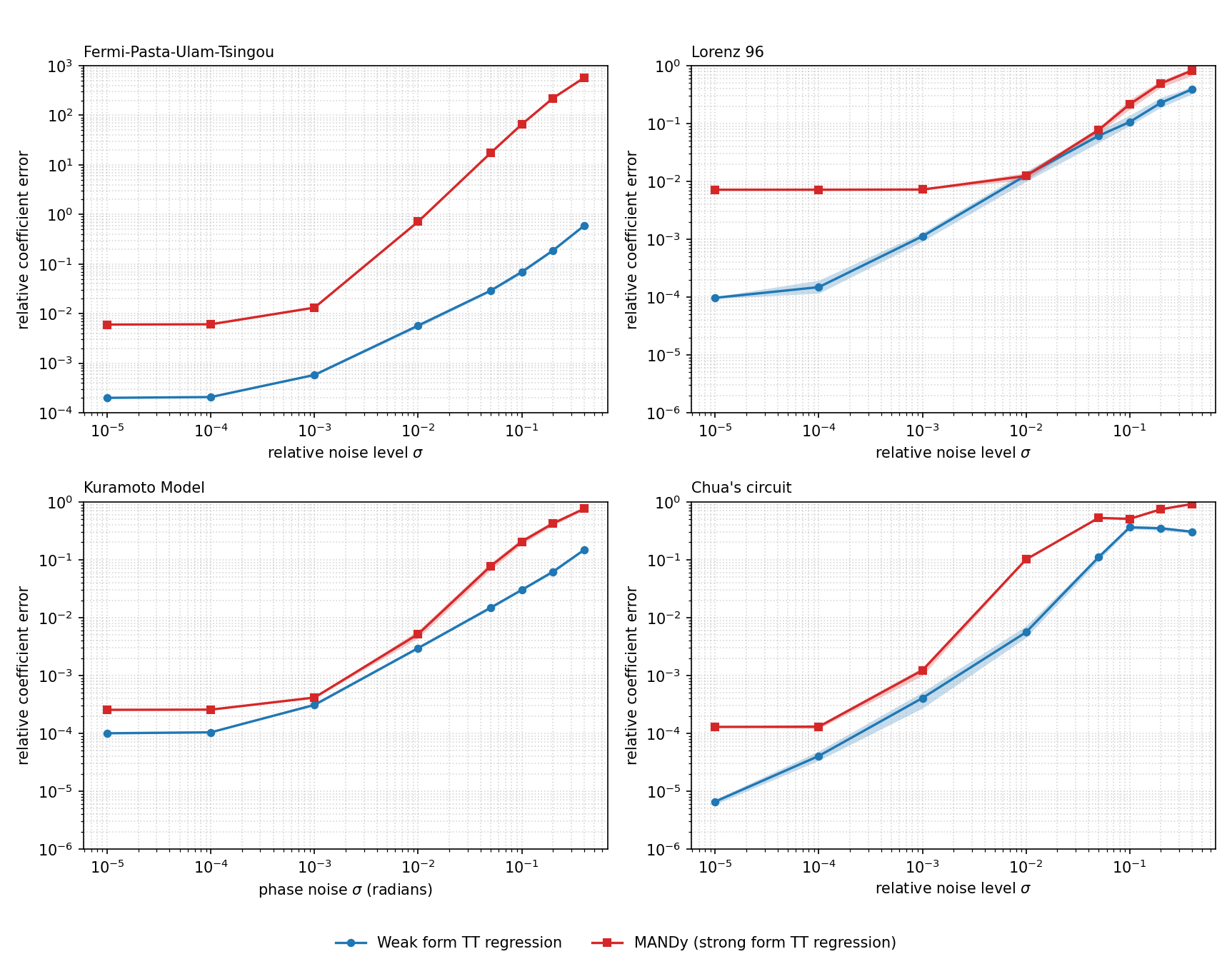}
    \caption{Weak form vs. strong form regression accuracy. Relative coefficient error is plotted against noise ratio for each of the four ODEs given in \cref{tab:exampleODEs}. The translucent bands give the upper and lower quartiles of error over 40 realizations. For a more detailed description of the variance across trials, see \cref{fig:weakvstrongform_boxplot}.}
    \label{fig:weakvstrongform_all}
\end{figure}

\subsection{TT-WSINDy Walltime}\label{sec:walltime}
To illustrate how TT-WSINDy scales in increasing $D$, we examine the Lorenz 96 model. To correctly identify the dynamics given in \cref{tab:exampleODEs}, we must include cross terms $x_ix_j$ between dimensions. If we want to check only the true basis terms $\@f = \{1,x\}$, then the induced search space is $|\@g| = 2^D$ per equation. In contrast, finding the true coarse support yields the following feature vectors for the $d^\text{th}$ dimension
\begin{equation}
    \@\theta^{(1)},\dots,\@\theta^{(D)} = \overbrace{[1],\dots,[1]}^{d-3}, \begin{bmatrix}
        1 \\ \!x_{d-2}
    \end{bmatrix}, \begin{bmatrix}
        1 \\ \!x_{d-1}
    \end{bmatrix}, \begin{bmatrix}
        1 \\ \!x_{d}
    \end{bmatrix}, \begin{bmatrix}
        1 \\ \!x_{d+1}
    \end{bmatrix},
    \overbrace{[1],\dots,[1]}^{D-d-1}
    \label{eqn:L96truecoarsesupp}
\end{equation}
which induces a size $2^4 = 16$ search space per equation. Given this support, the computational cost of matrix WSINDy is trivial. 

For $D = 5,\dots,12$, we fix $M = 20 000$, sufficiently many snapshots to identify the true support for all values of $D$. We corrupt with a modest noise ratio of $\sigma_{NR} = 10^{-3}$, and use the same test function as given in \cref{eqn:polytestfn}. Given that $M \gg J^D$, we again use the low-rank feature tensor construction.

In \cref{fig:lorenz96support}, we observe that TT-WSINDy does indeed limit the burden to the flat solver by handing off a significantly reduced support: for $D \leq 11$, the coarse pass identified the true coarse support described by equation \cref{eqn:L96truecoarsesupp}, after which the flat solve identified the true dynamics. For $D = 12$, a support containing around $2000$ spurious terms was identified, but the ensuing flat solve recovered the true dynamics. In all experiments, TT-WSINDy and flat WSINDy identified the true support. Both methods discovered identical coefficients to one another, with relative coefficient errors on the order of $10^{-4}$ at all values of $D$. 

\begin{figure}[htb]
    \centering
    \includegraphics[width=0.6\linewidth]{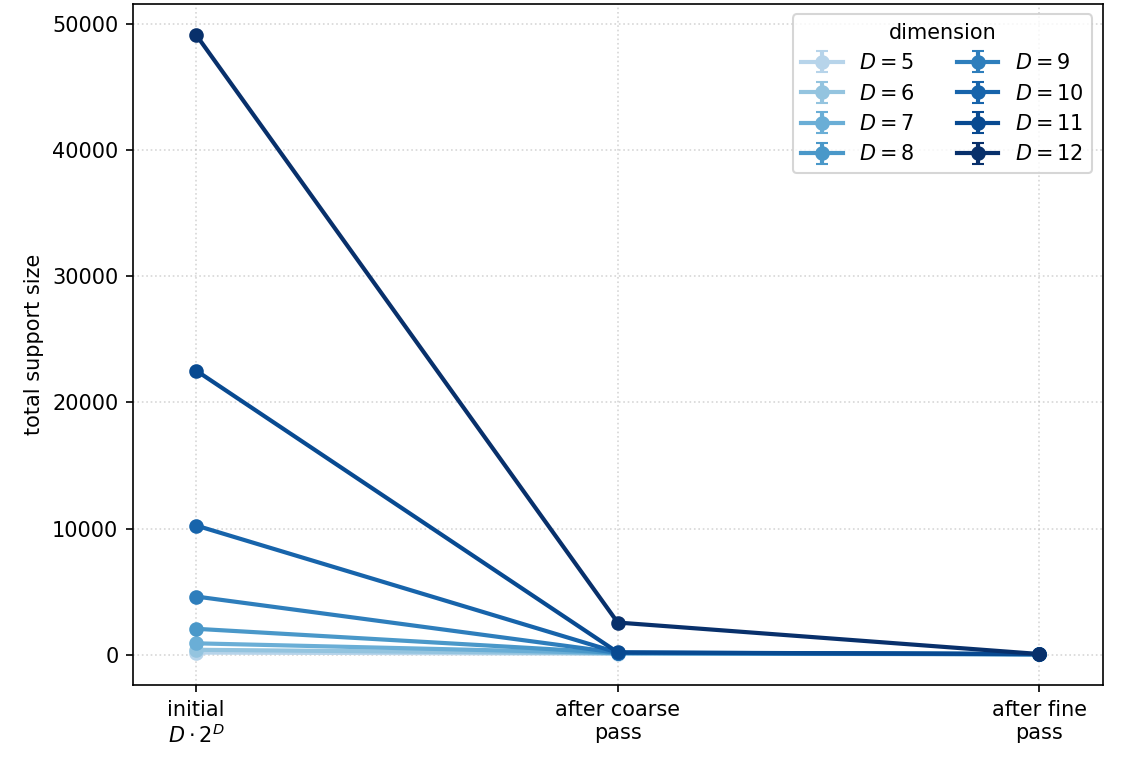}
    \caption{Visualization of the support reduction from the coarse to fine pass on Lorenz 96 for $D = 5,\dots,12$. The total support is the sum of the support of each equation in the system.}
    \label{fig:lorenz96support}
\end{figure}

\begin{figure}[htb]
    \centering
    \includegraphics[width=0.48\linewidth]{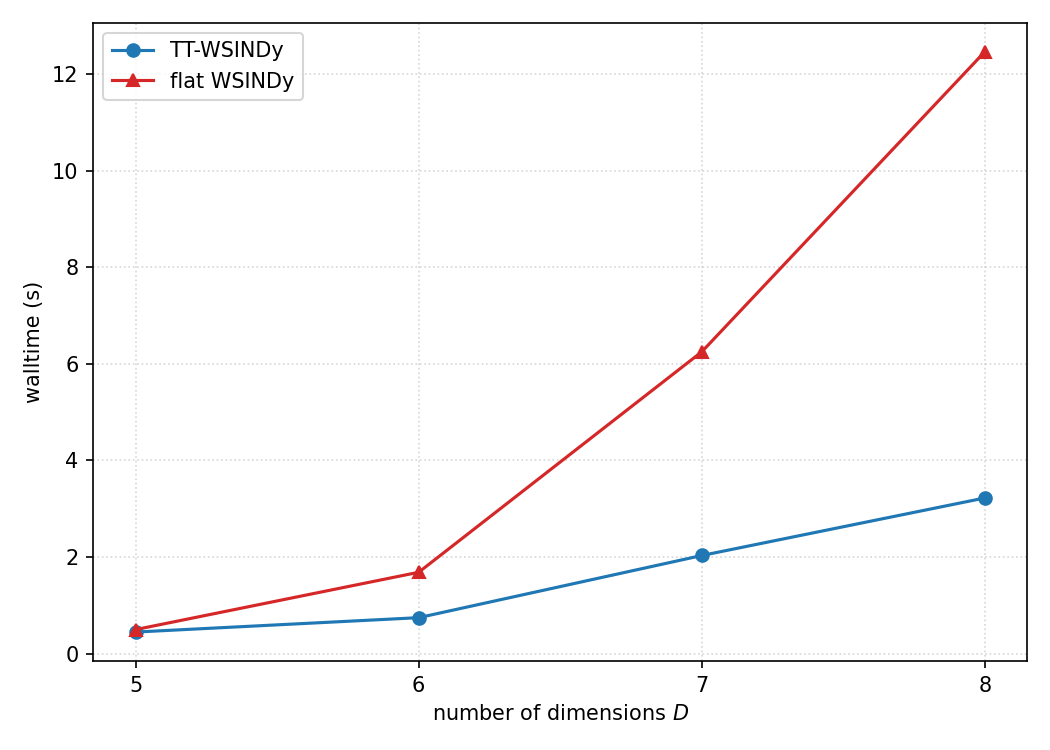}
    \includegraphics[width=0.48\linewidth]{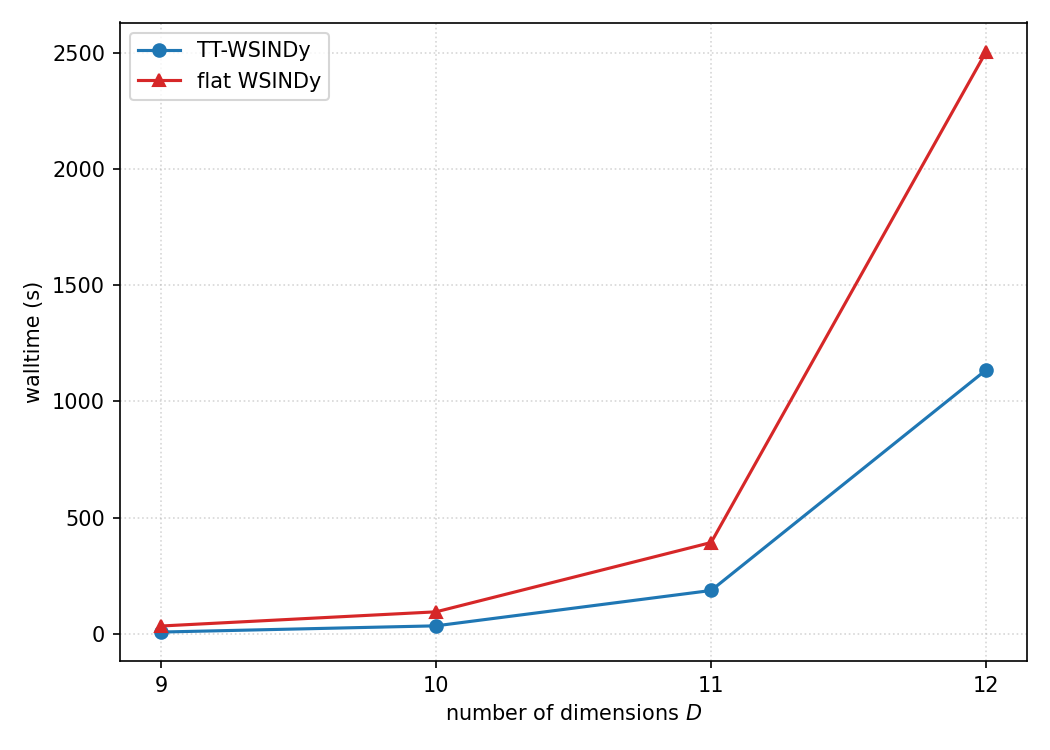}
    \caption{Walltime of flat WSINDy, and low-rank TT-WSINDy, averaged over 5 trials and plotted against the dimension of the Lorenz 96 system. For visibility, we plot the low-dimensional and high-dimensional experiments separately.}
    \label{fig:walltimevD}
\end{figure}

We observe the empirical computational gains in \cref{fig:walltimevD}. For $D=6,\dots,12$, TT-WSINDy outperformed flat WSINDy by a speedup factor of $2$ to $7$. $D=5$ was the changepoint at which the flat solver was faster; a coarse TT-MSTLS screening of the size $2^5$ search space did not yield computational benefits.
                           
\section{Conclusion}
We have extended the WSINDy algorithm using techniques from the MANDy method, achieving significant time and memory savings over a rich search space of candidate functions. We have shown that the strong-form problem can be translated into the weak form at little cost, and that the problem can be solved with existing WSINDy sparsification techniques without losing the time and memory complexity benefits of the tensor-train format. In particular, we showed that the TT-STLS stage has time and memory complexities that are polynomial in the dimension of the ODE rather than exponential. We have also made the algorithm tractable for datasets that are dense in the temporal axis, using the low-rank weak feature tensor construction.

We will close with future directions for this work. The first is to investigate more closely the statistical properties of the TT-MSTLS coarse screening stage. We would like the algorithm to avoid false negatives, so that the fine WSINDy pass reliably has the full library of true terms.

Secondly, we would like to examine the settings in which one feature tensor construction is applicable over the other (i.e., the original construction vs. that given in \cref{alg:lowrank_featuretensor}). While the original construction demonstrably overcomes the curse of dimensionality, it requires that the number of time snapshots is at most on same order of magnitude as the search space size $J^D$. The low-rank construction makes the MANDy regression applicable to high-data settings, but its time and memory complexities become dependent on the correlation structure of the data. A deeper understanding of this tradeoff can increase robustness of the tensor-based approach.

We also aim to extend this work to PDEs, in which dimensionality comes from the scale of the incoming data, in addition to the potentially large search space. Recent work has explored a similar candidate library in the setting of strong-form PDEs, taking the dictionary to be products of partial derivatives of basis functions \cite{He.Kang2026}. Additionally, a first pass at adapting MANDy to PDEs has been presented in \cite{LinLuZhang2024JComputNonlinearDync}. We would like to take a more detailed look at the time and memory complexities of this approach, and see whether it can be improved while adapting it to our own algorithmic techniques. In particular, we believe that the low-rank feature tensor approach can be effective in lessening the computational load that a large spatiotemporal dataset entails.
\appendix\section{MANDy\label{sec:MANDy}}
TT-WSINDy is built on top of the \textit{Multidimensional Approximation of Nonlinear Dynamics (MANDy)} method. In this paper, we also refer to this as the \textit{strong form} of TT-WSINDy. We describe it here.
\subsection{MANDy problem construction}
Given feature vectors and feature cores as in \cref{sec:WSINDyTconstruction}, we define the \textit{strong feature tensor} as
\begin{equation}
    \@\Theta(\!X) := \text{TT} \big\llbracket \@\Theta^{(1)},\dots, \@\Theta^{(D)}, \!I \big\rrbracket \in \R^{J \tct J \times M}
\end{equation}
where $\!I$ is the $M \times M$ identity matrix. Including $\!I$ as its own core is necessary for the MANDy construction to be equivalent to the matrix SINDy problem. The MANDy method then calculates the coefficient tensor as
\begin{equation}
    \widehat{\cW} = \dot{\!X} \tcont \@\Theta(\!X)^\dag 
\end{equation}
where $\@\Theta(\!X)^\dag$ is computed via a TT pseudoinverse.

\subsection{TT Pseudoinverse algorithm\label{sec:TT-PI}}
We give an overview of the procedure for computing the TT pseudoinverse \cref{eqn:TTPseudoinverse}, proposed in \cite{Klusetal2018Nonlinearity}. For more complete detail, see the original paper. 

The algorithm requires that the final TT core is \textit{right-orthonormal}, and the others are \textit{left-orthonormal}. We say a TT core $\cT^{(n)}$ is left-orthonormal if
\begin{equation*}
    \!L\big(\cT^{(n)}\big)^T\!L(\cT^{(n)}) = \!I \in \R^{R_n \times R_n}
\end{equation*}
and right-orthonormal if
\begin{equation*}
    \!R(\cT^{(n)})\!R\big(\cT^{(n)}\big)^T = \!I \in \R^{R_{n-1} \times R_{n-1}}
\end{equation*}
Algorithms to compute right- and left-orthonormal counterparts of a general TT core are given in \cite{Klusetal2018Nonlinearity}. In both, the computational effort is dominated by a singular value decomposition of the matricization of the core. Given these, the TT pseudoinverse algorithm is detailed in \cref{alg:TTPI} \footnote{In this algorithm, all left- and right-unfoldings are taken of order-3 tensors, so we  define the inverse unfoldings $\!L^{-1}, \!R^{-1}$ in the natural way, inverting the bijection $b_2$ to map a matrix to an order-3 tensor.}. Note that this algorithm ``splits'' $\cT$ at the index $N$. This choice is specific to the learning problem of this paper; any index in $[N+1]$ can be chosen. The more general version of this algorithm can be found in \cite{Klusetal2018Nonlinearity}.
\begin{algorithm}[htb]
\caption{\label{alg:TTPI} Pseudoinversion of tensor trains}
\begin{algorithmic}[1]
    \REQUIRE \quad Tensor cores $\cT^{(1)} \in \R^{1 \times I_1 \times R_1}$, $\cT^{(n)} \in \R^{R_{n-1} \times I_n \times R_n}$ for $n = 2,\dots, N$, $\cT^{(N+1)} \in \R^{R_N \times K \times 1}$
    \ENSURE Pseudoinverse $\cT^\dag \in \R^{K \times I_1 \tct I_N}$
    \FOR {$n = 1, \dots, N-1$}
        \STATE Compute $\!U^{(n)}\@\Sigma^{(n)}(\!V^{(n)})^T \leftarrow \text{SVD}(\!L(\cT^{(n)}))$
        \STATE Define $\widetilde{\cT}^{(n)} := \!L^{-1}(\!U^{(n)})$
        \STATE Recompute $\cT^{(n+1)} \leftarrow \@\Sigma^{(n)}(V^{(n)})^T \tcont \cT^{(n+1)}$
    \ENDFOR
    \STATE Compute $\!U^{(N+1)}\@\Sigma^{(N+1)}(\!V^{(N+1)})^T \leftarrow \text{SVD}(\cT^{(N+1)})$ 
    \STATE Recompute $\cT^{(N)} \leftarrow \!L(\cT^{(N)})\!U^{(N+1)}\Sigma^{(N+1)}$
    \STATE Compute $\!U^{(N)}\@\Sigma (\!V^{(N)})^T \leftarrow \text{SVD}(\cT^{(N)})$
    \STATE Define $\widetilde{\cT}^{(N)} := \!L^{-1}(\!U^{(N)})$
    \STATE Compute $\widetilde{\cT}^{(N+1)} \leftarrow (\!V^{(N)})^T \tcont \!R^{-1}((\!V^{(N+1)})^T)$
    \RETURN $\cT^\dag \leftarrow \text{TT}\big\llbracket \@\Sigma^{-1}\tcont \widetilde{\cT}^{(N+1)}, \widetilde{\cT}^{(1)},\dots,\widetilde{\cT}^{(N)}\big\rrbracket$
\end{algorithmic}
\end{algorithm}

The complexity of the algorithm is dominated by the cost of the constituent matrix SVDs. This gives the time complexity $\cO(NIR^3 + KR^2)$, presented in \cref{lem:TTPI-runtime}.

\subsection{Function-major feature tensor\label{sec:functionMajor}}
There is an alternate way to construct the feature tensor. In \cref{sec:WSINDyTconstruction}, we define the feature tensor over feature vectors that contain evaluations of each $f_j$ on a given equation. We may instead write
\begin{equation}
    \@\theta^{(j)}(t_m) = \begin{bmatrix}
        f_j(x_{1,m}) \\ \vdots \\ f_j(x_{D,m})
    \end{bmatrix} \in \R^D
\end{equation}
We refer to this as a \textit{function-major} feature vector, and \cref{eqn:dimmajorfeaturevector} as \textit{dimension-major}. The function-major feature tensor is then
\begin{equation*}
    \@\Theta(\!X,\@\varphi) \in \R^{\overbrace{D \tct D}^J \times M'}
\end{equation*}
While these representations correspond to different sizes of flat problems ($\cO(J^DM)$ versus $\cO(D^JM)$), both feature tensors have memory consumption in $\cO(JDM^2)$ and take $\cO(JDM^3)$ time to compute the pseudoinverse.

\section{Optimizations for TT-WSINDy}\label{sec:optimizations}
For completeness, we describe two additional optimizations in our TT-WSINDy implementation that were used to generate the walltime results in \cref{sec:walltime}.

First, we cache superfluous TT pseudoinverse solves across iterations. The pseudoinverse $\@\Theta^\dag$ has no dependence on the LHS data. Moreover, in each dimension $d$, we need only store the size $\cO(JD)$ set
\begin{equation}
    \text{tr}\big[ (\!W^{(d)}[j])^T\!D_L^{(d-1)}\!W^{(d)}[j] \!D_R^{(d+1)} \big], \quad d \in [D], j \in [J_d]
\end{equation}
for each support, rather than the entire pseudoinverse. So for a given support $S$, upon finding the associated set of traces, we can cache the low-memory set and reuse it when we need to threshold a different parameter $\lambda$ on $S$. This avoids unnecessary repeated calculations.

Second, we implemented a boolean \texttt{one\_pass} parameter that forces only a single coarse pass of TT-STLS. This is intended for ``medium-size'' search spaces -- those that are large enough that TT-WSINDy is computationally preferable, but small enough that a single iteration of coarsening is typically enough to hand off the problem to matrix MSTLS. For example, if we have $J = 5, D = 5$, the total search space for a single equation is $5^5 = 3125$. If, say, we find only 2 basis terms to be supported, the search space drops to $2^5 = 32$, at which point naively calling the TT pseudoinverse algorithm is unnecessary and can negate the computational savings of the algorithm.

For the Lorenz 96 system, the one-pass approach was sufficient. In more generality, this would take the form of a heuristic that reads the current shape parameters of the feature tensor and decides whether to perform TT sparsification or hand off the problem to the flat solver. We leave an investigation of such a heuristic to future work.

\begin{figure}
    \centering
    \includegraphics[width=\linewidth]{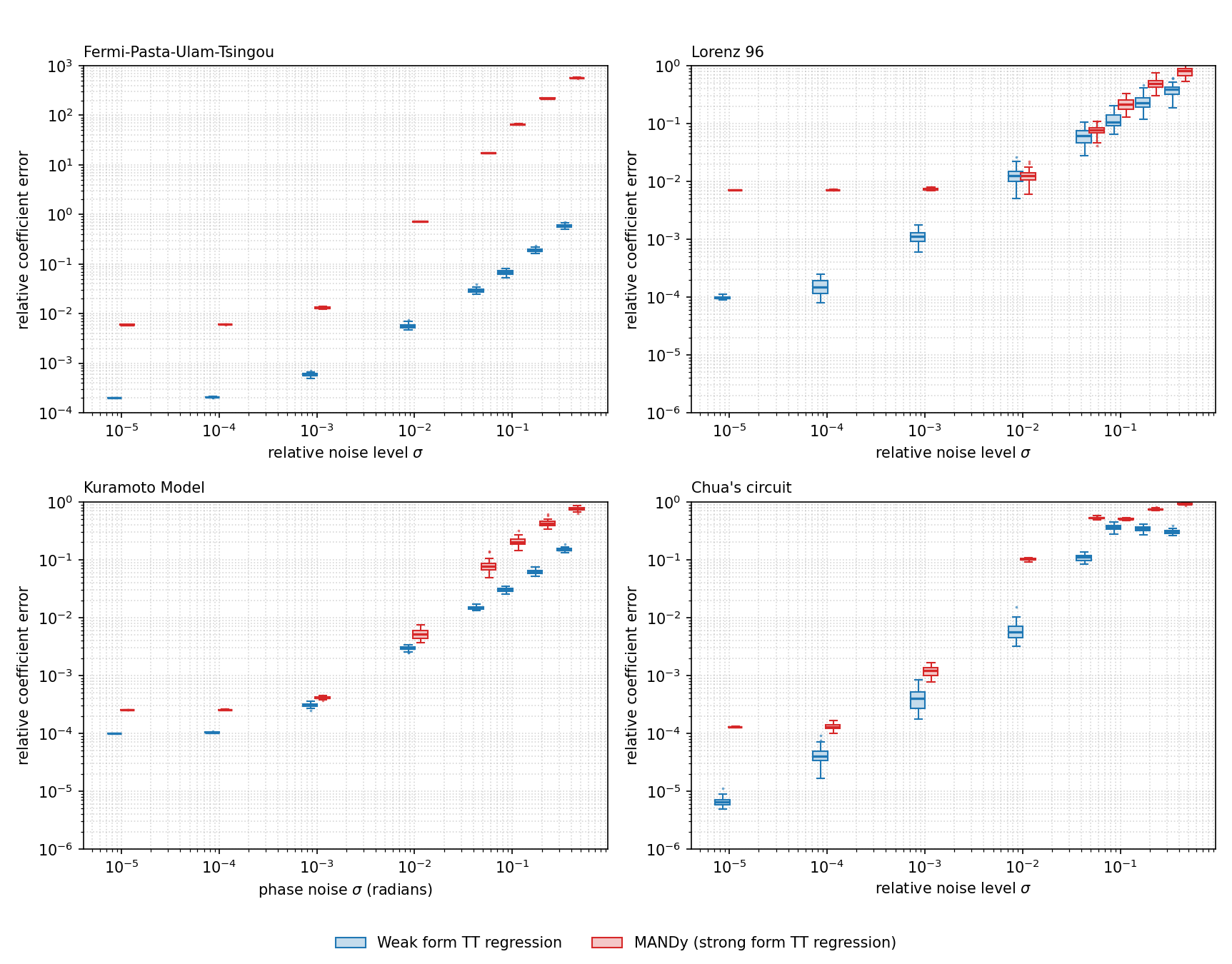}
    \caption{Box plot corresponding to the results in \cref{fig:weakvstrongform_all}.}
    \label{fig:weakvstrongform_boxplot}
\end{figure}

\section*{Acknowledgments}
This work was supported in part by the following grants: NIGMS R35GM149335, DOE DE-SC0023346, NSF DEB-2109774, and NIFA 2019-67014-29919. All authors stand by the results of this paper. Code used in this manuscript is publicly available at \href{https://github.com/whouser2001/TT-WSINDy.git}{https://github.com/whouser2001/TT-WSINDy.git}.

\bibliographystyle{siamplain}
\bibliography{bibs/tt-wsindy}

\end{document}